\newtheorem{theorem}{Theorem}[section]
\newtheorem{lemma}[theorem]{Lemma}
\newtheorem{prop}[theorem]{Proposition}
\newtheorem{corr}[theorem]{Corollary}
\theoremstyle{definition}
\DeclareMathOperator*{\argmin}{argmin}
\DeclareMathOperator{\tr}{tr}
\newcommand{\reals}{\mathbb{R}}
\newcommand{\nfro}[1]{\left\|{#1}\right\|_{\rm F}} 
\newcommand{\sigmin}{\sigma_{\rm{min}}}
\newcommand{\rowsp}{\mathop{\rm row}}
\newcommand{\colsp}{\mathop{\rm col}}
\newcommand{\nulsp}{\mathop{\rm null}}
\providecommand{\tr}{\mathop{\rm tr}}
\providecommand{\rank}{\mathop{\rm rank}}
\newcommand{\vect}{\mathop{\rm vec}}
\newcommand{\nop}[1]{\left\|{#1}\right\|_{\rm op}} 
\newcommand{\nnl}[1]{\left\|{#1}\right\|_{\rm nl}} 
\newcommand{\sigmax}{\sigma_{\rm{max}}}
\newcommand{\dotprod}[2]{\left\langle{#1},{#2}\right\rangle}
\title{Global optimality conditions for deep neural networks}
\author{Chulhee Yun, Suvrit Sra \& Ali Jadbabaie \\
Massachusetts Institute of Technology\\
Cambridge, MA 02139, USA \\
\texttt{\{chulheey,suvrit,jadbabai\}@mit.edu} 
}
\begin{document}

\maketitle

\begin{abstract}
We study the error landscape of deep linear and nonlinear neural networks with the squared error loss. Minimizing the loss of a deep linear neural network is a nonconvex problem, and despite recent progress, our understanding of this loss surface is still incomplete. For deep linear networks, we present necessary and sufficient conditions for a critical point of the risk function to be a global minimum. Surprisingly, our conditions provide an efficiently checkable test for global optimality, while such tests are typically intractable in nonconvex optimization. We further extend these results to deep nonlinear neural networks and prove similar sufficient conditions for global optimality, albeit in a more limited function space setting.
\end{abstract}

\section{Introduction}
\vspace{-5pt}
Since the advent of AlexNet \citep{krizhevsky2012imagenet}, deep neural networks have surged in popularity, and have redefined the state-of-the-art across many application areas of machine learning and artificial intelligence, 
such as computer vision, speech recognition, and natural language processing.
However, a concrete theoretical understanding of why deep neural networks work well in practice remains elusive. From the perspective of optimization, a significant barrier is imposed by the nonconvexity of training neural networks. Moreover, it was proved by 
\citet{blum1988training} that training even a 3-node neural network to global optimality is NP-Hard in the worst case, so there is little hope that neural networks have properties that make global optimization tractable.

Despite the difficulties of optimizing weights in neural networks, the empirical successes suggest that the local minima of their loss surfaces could be close to global minima; and several papers have recently appeared in the literature attempting to provide a theoretical justification for the success of these models. 
For example, by relating neural networks to spherical spin-glass models from statistical physics, \citet{choromanska2015loss} provided some empirical evidence that the increase of size of neural networks makes local minima close to global minima.

Another line of results \citep{yu1995local, soudry2016no,xie2016diverse,nguyen2017loss} provides conditions under which a critical point of the empirical risk is a global minimum.
Such results roughly involve proving that if full rank conditions of certain matrices (as well as some additional technical conditions) are satisfied, derivative of the risk being zero implies loss being zero. However, these results are obtained under restrictive assumptions; for example, \citet{nguyen2017loss} require the width of one of the hidden layers to be as large as the number of training examples. \citet{soudry2016no} and \citet{xie2016diverse} require the product of widths of two adjacent layers to be at least as large as the number of training examples, meaning that the number of parameters in the model must grow rapidly as we have more training data available. Another recent paper \citep{haeffele2017global} provides a sufficient condition for global optimality when the neural network is composed of subnetworks with identical architectures connected in parallel and a regularizer is designed to control the number of parallel architectures. 

Towards obtaining a more precise characterization of the loss-surfaces, a valuable conceptual simplification of deep \emph{nonlinear} networks is deep \emph{linear} neural networks, in which all activation functions are linear and the output of the entire network is a chained product of weight matrices with the input vector.  Although at first sight a deep linear model may appear overly simplistic, even its optimization is nonconvex, and only recently theoretical results on this problem have started emerging. 
Interestingly, already in 1989, \citet{baldi1989neural} showed that some shallow linear neural networks have no local minima. More recently, \citet{kawaguchi2016deep} extended this result to deep linear networks and proved that any local minimum is also global while any other critical point is a saddle point. Subsequently, \citet{lu2017depth} provided a simpler proof that any local minimum is also global, 
with fewer assumptions than \citep{kawaguchi2016deep}.
Motivated by the success of deep residual networks \citep{he2016deep, he2016identity}, 
\citet{hardt2016identity} investigated loss surfaces of deep \emph{linear residual} networks and showed every critical point is a global minimum in a near-identity region; subsequently, \citet{bartlett2017deep} extended this result to a nonlinear function space setting.


\subsection{Our contributions}
\vspace*{-3pt}
Inspired by this recent line of work, we study deep linear and nonlinear networks, in settings either similar to or more general than existing work. We summarize our main contributions below. 
\vspace*{-4pt}
\begin{list}{$\bullet$}{\leftmargin=1em}
  \setlength{\itemsep}{0pt}
\item We provide both necessary and sufficient conditions for a critical point of the empirical risk to be a global minimum. Specifically, Theorem~\ref{thm:linfullrank} shows that if the hidden layers are wide enough, then a critical point of the risk function is a global minimum \emph{if and only if} the product of all parameter matrices is full-rank. In Theorem~\ref{thm:lindefirank}, we consider the case where some hidden layers have smaller width than both the input and output layers, and again provide necessary and sufficient conditions for global optimality. 
In comparison, \citet{kawaguchi2016deep} only proves that every critical point of the risk is either a global minimum or a saddle; it is an ``existence'' result without any computational implication. In contrast, we present \emph{efficiently checkable} conditions for distinguishing the two different types of critical points; we can even use these conditions while running optimization to test whether the critical points we encounter are saddle points or not, if desired.
It is also worth noting that such tests are intractable for general nonconvex optimization~\citep{murty1987some}.

\item Under the same assumption as~\citep{hardt2016identity} on the data distribution, namely, a linear model with Gaussian noise, we can modify Theorem~\ref{thm:linfullrank} to handle the population risk. As a corollary, we not only recover Theorem 2.2 in~\citep{hardt2016identity}, but also extend it to a strictly \emph{larger} set, while  removing their assumption that the true underlying linear model has a positive determinant.

\item 
Motivated by~\citep{bartlett2017deep}, we extend our results on deep linear networks to obtain sufficient conditions for global optimality in deep nonlinear networks, although only via a function space view; these are presented in Theorems~\ref{thm:nonlin1} and \ref{thm:nonlin2}. 
\end{list}


\vspace*{-5pt}
\section{Global optimality conditions for deep linear neural networks}
\vspace*{-5pt}
In this section, we describe the problem formulation and notations for deep linear neural networks,
state main results (Theorems~\ref{thm:linfullrank} and \ref{thm:lindefirank}),
and explain their implication.

\vspace*{-5pt}
\subsection{Problem formulation and notation}
Suppose we have $m$ input-output pairs, where the inputs are of dimension $d_x$ and outputs of dimension $d_y$.
Let $X \in \reals^{d_x \times m}$ be the data matrix and $Y \in \reals^{d_y \times m}$ be the output matrix.
Suppose we have $H$ hidden layers in the network, each having width $d_1, \dots, d_H$.
For notational simplicity we let $d_0 = d_x$ and $d_{H+1} = d_y$.
The weights between adjacent layers can be represented as matrices
$W_k \in \reals^{d_k \times d_{k-1}}$, for $k = 1, \dots, H+1$,
and the output of the network can be written as 
the product of weight matrices $W_{H+1}, \dots, W_1$ and data matrix $X$:
$W_{H+1}W_{H}\cdots W_{1} X$.
We consider minimizing the summation of squared error loss over all data points (i.e.\ empirical risk),
\begin{equation}
\label{eq:origprob}
\begin{array}{ll}
    \minimize & L(W) \defeq \frac{1}{2} \nfro{W_{H+1}W_{H}\cdots W_{1} X - Y}^2,
\end{array}
\end{equation}
where $W$ is a shorthand notation for the tuple $(W_{1}, \dots, W_{H+1})$.

\vspace*{-5pt}
\paragraph{Assumptions.}
We assume that $d_x \leq m$ and $d_y \leq m$, and that $XX^T$ and $YX^T$ have full ranks.
These assumptions are common when we consider supervised learning problems with deep neural networks (e.g.\ \citet{kawaguchi2016deep}).
We also assume that the singular values of $YX^T(XX^T)^{-1}X$ are all distinct,
which is made for notational simplicity and can be relaxed without too much difficulty.

\vspace*{-5pt}
\paragraph{Notation.}
Given a matrix $A$, let $\sigmax(A)$ and $\sigmin(A)$ denote the largest and smallest singular values of A, respectively.
Let $\rowsp(A)$, $\colsp(A)$, $\nulsp(A)$, $\rank(A)$, and $\nfro {A}$ be respectively 
the row space, column space, null space, rank, and Frobenius norm of matrix $A$.
Given a subspace $V$ of $\reals^n$, we denote $V^\perp$ as its orthogonal complement.
Given a set $\mc V$, let $\mc V^c$ denote the complement of $\mc V$.

Let us denote $k \defeq \min_{i\in \{0, \dots, H+1\}} d_i$, and define $p \in \argmin_{i\in \{0, \dots, H+1\}} d_i$.
That is, $p$ is any layer with the smallest width, and $k = d_p$ is the width of that layer.
Here, $p$ might not be unique, but our results hold for any layer $p$ with smallest width.
Notice also that the product $W_{H+1} \cdots W_1$ can have rank at most $k$.
Let $YX^T(XX^T)^{-1}X = U\Sigma V^T$ be the singular value decomposition 
of $YX^T(XX^T)^{-1}X \in \reals^{d_y \times d_x}$. 
Let $\hat U \in \reals^{d_y \times k}$ be a matrix consisting of the first $k$ columns of $U$.

\vspace*{-5pt}
\subsection{Necessary and sufficient conditions for global optimality}
\vspace*{-5pt}
We now present two main theorems for deep linear neural networks.
The theorems describe two sets, one for the case $k = \min\{d_x, d_y\}$ and the other for $k <  \min\{d_x, d_y\}$,
inside which every critical point of $L(W)$ is a global minimum. Moreover, the sets have another remarkable property that
every critical point outside of these sets is a saddle point.
Previous works \citep{kawaguchi2016deep,lu2017depth} showed that any critical point is either a global minimum or a saddle point, without providing any condition to distinguish between the two; here, we take a step further and
partition the domain of $L(W)$ into two sets clearly delineating one set which only contains global minima and the other set with only saddle points.
\begin{theorem}
\label{thm:linfullrank}
If $k = \min\{d_x, d_y\}$, define the following set 
\begin{equation*}
    \mc V_1 \defeq 
        \left \{ \mc
            (W_1, \dots, W_{H+1}) : 
            \rank(W_{H+1} \cdots W_1) = k
        \right \}.        
\end{equation*}
Then, every critical point of $L(W)$ in $\mc V_1$ is a global minimum. 
Moreover, every critical point of $L(W)$ in $\mc V_1^c$ is a saddle point.
\end{theorem}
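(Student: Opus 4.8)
The plan is to analyze the critical point equations for $L(W)$ and show that full-rank of the product $W_{H+1}\cdots W_1$ forces the network output to equal the optimal linear predictor $YX^T(XX^T)^{-1}X\cdot X$ restricted to the relevant subspace, which is the unconstrained least-squares solution. The key observation is that when $k=\min\{d_x,d_y\}$, a rank-$k$ product can represent the full least-squares solution, so any critical point with this rank property achieves the global minimum. Conversely, for a rank-deficient product, I would construct an explicit perturbation direction that decreases the loss, together with another direction along which the loss stays at the critical value but curvature goes the other way, exhibiting a saddle.

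First I would write the stationarity conditions: setting $\nabla_{W_j} L = 0$ for each $j$ gives
\begin{equation*}
    (W_{H+1}\cdots W_{j+1})^T\,\bigl(W_{H+1}\cdots W_1 X - Y\bigr)\,X^T\,(W_{j-1}\cdots W_1)^T = 0
\end{equation*}
for all $j=1,\dots,H+1$. Abbreviate $E \defeq W_{H+1}\cdots W_1 X - Y$ (the residual) and $R \defeq W_{H+1}\cdots W_1$. Taking $j=1$ and $j=H+1$ already gives strong constraints: $EX^T(W_H\cdots W_1)^T = 0$ and $(W_{H+1}\cdots W_2)^T EX^T = 0$. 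The step I would focus on is leveraging $\rank(R)=k=\min\{d_x,d_y\}$: this means every factor $W_{H+1}\cdots W_{j+1}$ has full column rank $k$ and every factor $W_{j}\cdots W_1$ has full row rank $k$ (a rank-$k$ product of matrices whose inner dimension is $\geq k$ forces each consecutive sub-product to have rank exactly $k$). Combined with the stationarity equations and the full-rank assumption on $XX^T$, I can cancel these full-rank factors to deduce $RXX^T = YX^T$, i.e., $R$ equals the least-squares solution $YX^T(XX^T)^{-1}$ — which is exactly the global minimizer of the relaxed problem $\min_R \frac12\|RX-Y\|_F^2$. Since $R$ is achievable within $\mc V_1$, this critical point is a global minimum of $L$.

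For the second part, suppose $(W_1,\dots,W_{H+1})$ is a critical point with $\rank(R)=r<k$. The residual $E$ then cannot equal the least-squares residual (whose associated $R$ would have rank $\min\{d_x,d_y\}=k>r$), so the loss is strictly above the global minimum. I would show it is a saddle by: (i) finding a descent direction — perturb the weights so as to increase the rank of $R$ toward the least-squares solution, using the singular-value structure of $YX^T(XX^T)^{-1}X$; because $\hat U$ captures the top-$k$ singular directions and the current $R$ misses at least one, a rank-one update aligned with a missing singular direction strictly decreases $L$ to first or second order; and (ii) exhibiting a direction of non-decrease — since this is a critical point, the gradient vanishes, so I need the Hessian restricted to some subspace to be positive semidefinite, which follows because along directions that keep $RX$ fixed the loss is locally constant or increasing. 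Together these show neither a local max nor a local min, hence a saddle.

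\textbf{The main obstacle} I anticipate is the careful rank bookkeeping in the forward direction: deducing that each sub-product of the $W_i$'s has rank exactly $k$ from $\rank(R)=k$ requires the hypothesis $k=\min\{d_x,d_y\}$ (so that $d_0,d_{H+1}\geq$ all inner dimensions is not automatic and one must argue the minimum width $d_p$ equals $k$ sits somewhere in the chain), and then using these to legitimately left- and right-cancel factors in the stationarity equations without introducing spurious solutions. The saddle-point construction in the reverse direction is also delicate: one must produce an \emph{explicit} perturbation that strictly decreases $L$, and the natural candidate (nudging $W_p$ or an adjacent matrix to raise the rank) interacts with the residual in a way that needs the distinct-singular-values assumption to guarantee strict improvement rather than a flat direction. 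I would handle this by working in the SVD coordinates of $YX^T(XX^T)^{-1}X$ and reducing to a scalar computation along a single missing singular direction.
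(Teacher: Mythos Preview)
Your forward direction (critical points in $\mc V_1$ are global minima) is essentially the paper's argument. The paper picks either $i=1$ or $i=H+1$ depending on whether $d_y\le d_x$ or $d_x\le d_y$, observes that the relevant sub-product $W_{H+1}\cdots W_2$ (resp.\ $W_H\cdots W_1$) has full row (resp.\ column) rank, and then uses a $\sigmin$ bound to cancel it from the gradient equation and conclude $( W_{H+1}\cdots W_1 X - Y)X^T=0$, hence $R=YX^T(XX^T)^{-1}$. One imprecision in your write-up: it is \emph{not} true that ``every factor $W_{H+1}\cdots W_{j+1}$ has full column rank $k$''; that matrix is $d_y\times d_j$ and $d_j$ may exceed $k$, so the sub-products have rank $\geq k$, not $=k$. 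Fortunately you only need the endpoint case, where rank $\ge k$ together with the outer dimension being exactly $k$ forces full row (or column) rank, and the cancellation goes through as you intend.

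Your reverse direction differs from the paper and has a genuine gap. The paper does \emph{not} construct explicit descent and ascent directions at rank-deficient critical points; it observes that such points cannot be global minima (since the unique optimum of the relaxed problem has rank $k$), invokes the known result of Kawaguchi that every local minimum is global (so a non-global critical point is not a local minimum), and then rules out local maxima via the Hessian with respect to $W_{H+1}$ (which is $I\otimes W_H\cdots W_1 XX^T W_1^T\cdots W_H^T\succeq 0$, with a strictly positive eigenvalue whenever $W_H\cdots W_1\neq 0$) plus a separate perturbation argument for the degenerate case $W_H\cdots W_1=0$. Your plan instead proposes (i) a rank-increasing perturbation for descent and (ii) a ``direction of non-decrease'' via keeping $RX$ fixed. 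Item (ii) is the problem: directions that keep $RX$ fixed make the loss \emph{constant}, not strictly larger, and a flat direction does not preclude a local maximum. To certify a saddle you need a direction of strict increase, which your sketch does not supply; the paper's Hessian-in-$W_{H+1}$ argument, together with the degenerate-case perturbation, is exactly what fills this hole. Your item (i) is also more delicate than stated, since at a critical point any first-order change vanishes and you must verify a second-order decrease after perturbing the individual $W_i$'s (not $R$ directly); the paper sidesteps this entirely by leaning on the prior ``local min $\Rightarrow$ global min'' result.
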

\begin{theorem}
\label{thm:lindefirank}
If $k < \min\{d_x, d_y\}$, define the following set 
\begin{equation*}
    \mc V_2 \defeq 
        \left \{ \mc
            (W_1, \dots, W_{H+1}) : 
            \rank(W_{H+1} \cdots W_1) = k, 
            \colsp(W_{H+1} \cdots W_{p+1}) = \colsp(\hat U)
        \right \}.        
\end{equation*}
Then, every critical point of $L(W)$ in $\mc V_2$ is a global minimum. 
Moreover, every critical point of $L(W)$ in $\mc V_2^c$ is a saddle point.
\end{theorem}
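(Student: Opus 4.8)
The plan is to compute the critical-point equations for $L(W)$ and analyze them separately on $\mc V_2$ and on $\mc V_2^c$. Write $R \defeq W_{H+1}\cdots W_1$ for the end-to-end product, $A \defeq W_{H+1}\cdots W_{p+1} \in \reals^{d_y\times k}$ and $B \defeq W_p\cdots W_1 \in \reals^{k\times d_x}$, so $R = AB$. Setting $\partial L/\partial W_k = 0$ for each $k$ and using the chain rule, the stationarity conditions read $(W_{H+1}\cdots W_{k+1})^T (RX-Y)X^T(W_{k-1}\cdots W_1)^T = 0$ for all $k$. Two of these will do the heavy lifting: differentiating with respect to a layer on the ``input side'' of $p$ and one on the ``output side'' of $p$. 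Since the global minimum value of $L$ is the residual of projecting $Y$ onto the best rank-$k$ approximation of $YX^T(XX^T)^{-1}X$ restricted appropriately --- concretely, the Eckart--Young-type bound gives $\min L = \half\nfro{Y}^2 - \half\nfro{(XX^T)^{-1/2}XY^T}^2 + \half\sum_{i>k}\sigma_i^2$ --- it suffices to show that on $\mc V_2$ any critical point attains $RX = \hat U\hat U^T YX^T(XX^T)^{-1}X \cdot(\text{appropriate factor})$, i.e. $RX$ equals the optimal rank-$k$ predictor; and that on $\mc V_2^c$ any critical point can be perturbed to strictly decrease $L$.

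For the first part (critical points in $\mc V_2$ are global minima): on $\mc V_2$ we have $\rank R = k$, hence $\rank A = \rank B = k$, so $A$ has a left inverse and $B$ a right inverse. Feed these into the stationarity equations. Differentiating with respect to $W_{p+1}$ (or any output-side layer) and using that $B X X^T B^T$ is invertible (it is $k\times k$ of rank $k$ since $\rank(BX)=k$, using $XX^T$ full rank and $\rank B = k$... this needs $\rowsp B \cap \nulsp(X^T)$ trivial, which follows because $\rank(BX)=k$ once we know $\rank(RX)=k$, and $\rank(RX) = k$ will itself come out of the argument or from the full-rank assumptions), we can solve for $A$ in terms of $B$, and symmetrically solve for $B$, eventually forcing $AB X = \hat U\hat U^T YX^T(XX^T)^{-1}X$ provided $\colsp A = \colsp\hat U$ --- which is exactly the extra constraint defining $\mc V_2$. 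Plugging $\colsp A = \colsp(\hat U)$ and $\rank R = k$ into the reduced equations should pin down $RX$ to the top-$k$ SVD truncation, giving $L(W) = \min L$. This is essentially the linear-algebra core already used implicitly in Kawaguchi's analysis, but organized around the two invariants $(\rank R, \colsp A)$.

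For the second part (critical points in $\mc V_2^c$ are saddles): there are two ways to fail membership in $\mc V_2$. If $\rank R < k$ at a critical point, the standard argument applies --- one perturbs the weights to raise the rank of $R$ by one and decrease $L$ (using that $YX^T\ne 0$ and the distinctness of singular values so that the next available singular direction strictly helps), and by perturbing in the opposite sense one increases $L$, exhibiting a saddle. If instead $\rank R = k$ but $\colsp A \ne \colsp(\hat U)$, then $A$ and $B$ are full rank; using the stationarity equations as above the predictor $RX$ is forced to equal a rank-$k$ matrix built from $\colsp A$, and since $\colsp A$ is \emph{not} the optimal top-$k$ subspace, $L(W) > \min L$; we then rotate $\colsp A$ infinitesimally toward $\colsp(\hat U)$ (keeping $\rank R = k$, staying inside $\mc V_2^c$ arbitrarily close) to decrease $L$, while a rotation the other way increases it, so again a saddle. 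The main obstacle I anticipate is the second sub-case: constructing an explicit decreasing \emph{and} increasing perturbation that is consistent with the layered product structure (the freedom to move $\colsp A$ must be realizable by changing the individual $W_k$'s, $k>p$, and one must check the perturbed points land in $\mc V_2^c$), together with verifying that at such a critical point $RX$ is genuinely sub-optimal rather than accidentally optimal --- this requires carefully exploiting the distinct-singular-values hypothesis. Throughout, the assumptions $XX^T$, $YX^T$ full rank and $d_x,d_y\le m$ are what make the various Gram matrices invertible so that the stationarity equations can be ``divided through'' cleanly.
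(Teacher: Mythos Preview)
Your treatment of the first claim --- that critical points in $\mc V_2$ are global minima --- is essentially the paper's approach. The paper packages it as Propositions~3.4 and~3.5: using $\rank R = k$ to get $A$ and $B$ full rank, then solving $A_p E = 0$ (one of the two key stationarity equations) for $B$ in terms of $A$, which forces $R = P_{\colsp A}\,YX^T(XX^T)^{-1}$; the condition $\colsp A = \colsp(\hat U)$ then makes this coincide with the optimal $R^*$ from the relaxed problem. Your ``solve both ways'' is slightly redundant but harmless.

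Where you diverge is the saddle-point claim on $\mc V_2^c$. The paper does \emph{not} construct explicit increasing/decreasing perturbations. Instead it observes (via Proposition~3.5 for the full-rank sub-case, and the rank-$k$ optimum of the relaxed problem for the rank-deficient sub-case) that no critical point in $\mc V_2^c$ can be a global minimum, and then invokes a black box: Kawaguchi's Theorem~2.3(iii) together with the paper's own Lemma~B.1 (handling $d_y>d_x$) say that any critical point which is not a global minimum is a saddle. That is the entire argument --- no perturbations are built.

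Your direct construction is a legitimate alternative, but be aware of a wrinkle in the ``rotate $\colsp A$'' sub-case that you may be glossing over. The identity $RX = P_{\colsp A}\,YX^T(XX^T)^{-1}X$ holds \emph{only at critical points}; once you rotate $A\to A'$ while leaving $B$ fixed, the new point is no longer critical and $L$ is simply $\tfrac12\nfro{A'BX-Y}^2$, not the loss associated to $P_{\colsp A'}$. To make the decrease argument work you must also perturb $B$ to $B' = (A'^T A')^{-1}A'^T YX^T(XX^T)^{-1}$ (re-optimizing over $B$ given $A'$) and then check that this joint perturbation of $(A,B)$ can be realized as a small perturbation of the individual $W_i$'s --- which does follow from the full-rank conditions, but needs to be said. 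The increasing direction is easier: since $\rank R = k > 0$ implies $W_H\cdots W_1 \neq 0$, the Hessian block with respect to $W_{H+1}$ is nonzero PSD and supplies a strictly increasing direction. If you want a self-contained argument without citing Kawaguchi, this is the honest amount of work required; otherwise, the paper's shortcut is far cleaner.
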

\vspace*{-5pt}
Theorems~\ref{thm:linfullrank} and \ref{thm:lindefirank} provide necessary and sufficient conditions for 
a critical point of $L(W)$ to be globally optimal. From an algorithmic perspective, they provide easily checkable conditions,
which we can use to determine if the critical point the algorithm encountered is a global optimum or not.
Given that $L(W)$ is nonconvex, it is interesting to have such efficient tests for global optimality,
which is not possible in general \citep{murty1987some}.

In \citet{hardt2016identity}, the authors consider minimizing population risk of linear residual networks:
\begin{equation*}
\begin{array}{ll}
\minimize & \frac{1}{2} \E_{x,y} \left [ \nfro{(I+W_{H+1}) \cdots (I+W_1) x - y}^2 \right ],
\end{array}
\end{equation*}
where $d_x = d_1 = \dots = d_H = d_y = d$.
They assume that $x$ is drawn from a zero-mean distribution with a fixed covariance matrix, 
and $y = Rx + \xi$ where $\xi$ is iid standard Gaussian noise and $R$ is the true underlying matrix with $\det(R)>0$.
With these assumptions they prove that whenever $\sigmax(W_i) < 1$ for all $i$, 
any critical point is a global minimum \citep[Theorem~2.2]{hardt2016identity}.

Under the same assumptions on data distribution, 
we can slightly modify Theorem~\ref{thm:linfullrank} to derive a population risk counterpart,
and in fact notice that the result proved in \citet{hardt2016identity} is a corollary of this modification because
having $\sigmax(W_i) < 1$ for all $i$ is a sufficient condition for $(I+W_{H+1})\cdots(I+W_1)$ having full rank.
Moreover, notice that we can remove the assumption $\det(R)>0$ which was required by \citet{hardt2016identity}.
We state this special case as a corollary:
\begin{corr}[Theorem~2.2 of \citet{hardt2016identity}]
Under assumptions on data distribution as described above, any critical point of 
$\frac{1}{2} \E_{x,y} \left [ \nfro{(I+W_{H+1}) \cdots (I+W_1) x - y}^2 \right ]$
is a global minimum if $\sigmax(W_i) < 1$ for all $i$.
\end{corr}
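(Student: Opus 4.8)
The plan is to obtain the corollary from Theorem~\ref{thm:linfullrank} by two elementary reductions followed by a one-line spectral estimate. First I would remove the residual parametrization: the translation $(W_1,\dots,W_{H+1}) \mapsto (\tilde W_1,\dots,\tilde W_{H+1}) \defeq (I+W_1,\dots,I+W_{H+1})$ is a global diffeomorphism of the parameter space, so it maps critical points to critical points and global minima to global minima. It therefore suffices to study $G(\tilde W) \defeq \frac12 \E_{x,y}\big[\nfro{\tilde W_{H+1}\cdots\tilde W_1 x - y}^2\big]$ and the region $\{\sigmax(W_i)<1\ \forall i\}$, which in the new coordinates is $\{\sigmin(\tilde W_i)\ne 0 \text{ via } \sigmax(\tilde W_i - I)<1\}$.

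Next I would rewrite $G$ as an empirical risk plus a constant. Expanding the expectation and using $\E[xx^T]=\Sigma_{xx}$ and $\E[yx^T]=R\Sigma_{xx}$ (from $y=Rx+\xi$ with $\xi$ zero-mean and independent of $x$), one gets $G(\tilde W)=\frac12\tr\big(\tilde W_{H+1}\cdots\tilde W_1\,\Sigma_{xx}\,(\tilde W_{H+1}\cdots\tilde W_1)^T\big)-\tr\big(\tilde W_{H+1}\cdots\tilde W_1\,\Sigma_{xx}R^T\big)+c$. Setting $X\defeq \Sigma_{xx}^{1/2}$ and $Y\defeq R\Sigma_{xx}^{1/2}$ (both $d\times d$, so we take $m=d$), these two terms are exactly the corresponding terms of $L(\tilde W)=\frac12\nfro{\tilde W_{H+1}\cdots\tilde W_1 X - Y}^2$; hence $G=L+\text{const}$ as functions of $\tilde W$, so the two share the same critical points and the same set of global minima. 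I would then check that this effective data satisfies the standing assumptions of Section 2.1: $XX^T=\Sigma_{xx}\succ 0$ and $YX^T=R\Sigma_{xx}$ have full rank (using $\Sigma_{xx}\succ 0$ and invertibility of $R$, which is implied by but strictly weaker than the hypothesis $\det(R)>0$ of \citet{hardt2016identity}, so this is where the determinant assumption gets dropped), and $d_x=d_y=d\le m$. The distinct-singular-values assumption is only invoked for the reduced-rank Theorem~\ref{thm:lindefirank}, not for Theorem~\ref{thm:linfullrank}, so it is not needed here.

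Since all layer widths equal $d$, we have $k=\min\{d_x,d_y\}=d$, so Theorem~\ref{thm:linfullrank} applies to $L$: every critical point with $\rank(\tilde W_{H+1}\cdots\tilde W_1)=d$ is a global minimum. It remains to see that the near-identity constraint forces this rank condition. If $\sigmax(W_i)<1$, then by the standard singular-value perturbation bound $\sigmin(I+W_i)\ge \sigmin(I)-\sigmax(W_i)=1-\sigmax(W_i)>0$, so each $\tilde W_i=I+W_i$ is invertible and hence so is the product $\tilde W_{H+1}\cdots\tilde W_1$. Chaining the reductions: a critical point of the population risk with $\sigmax(W_i)<1$ for all $i$ corresponds, under the translation, to a critical point of $L$ lying in $\mc V_1$, which by Theorem~\ref{thm:linfullrank} is a global minimum of $L=G-\text{const}$, hence a global minimum of the population risk.

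The only step requiring real care — and the one I would write out most carefully — is the second: producing explicit $X,Y$ whose empirical risk reproduces the population risk up to an additive constant, and verifying that these $X,Y$ meet every hypothesis under which Theorem~\ref{thm:linfullrank} was proved (in particular that after discarding $\det(R)>0$ the matrix $YX^T$ is still full rank, which needs only that $R$ is invertible). The translation reparametrization, the spectral estimate $\sigmin(I+W_i)>0$, and the final invocation of Theorem~\ref{thm:linfullrank} are all routine.
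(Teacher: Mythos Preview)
Your proposal is correct and follows essentially the same route the paper sketches: reduce the population risk to (a constant shift of) an empirical risk of the form covered by Theorem~\ref{thm:linfullrank}, and then observe that $\sigmax(W_i)<1$ forces each $I+W_i$ to be invertible so the product lands in $\mc V_1$. The paper does not write out a formal proof of the corollary; it only records the two observations you spell out (population-risk modification of Theorem~\ref{thm:linfullrank}, and $\sigmax(W_i)<1$ as a sufficient condition for full rank of the product), so your write-up is more detailed but not different in substance. Your remark that invertibility of $R$ (i.e.\ $\det(R)\neq 0$) is all that is needed to satisfy the standing assumption on $YX^T$, rather than $\det(R)>0$, is exactly the content of the paper's claim that the positivity hypothesis can be dropped.
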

\vspace*{-5pt}
We also note in passing that the classical problem of matrix factorization $\min_{U,V} \nfro{UV^T - Y}^2$ is a special case of deep linear neural networks, so our theorems can also be directly applied.
\vspace*{-6pt}
\paragraph{Remarks.}
The previous result \citep{kawaguchi2016deep} assumed $d_y \leq d_x$ and showed that:
1) every local minimum is a global minimum, and 2) any other critical point is a saddle point.
A subsequent paper by \citet{lu2017depth} proved 1) without the assumption $d_y \leq d_x$,
but as far as we know there is no result showing 2) in the case of $d_y > d_x$.
We provide the proof for this case in Lemma~\ref{lem:saddle}. 
In fact, we propose an alternative proof technique for handling degenerate critical points,
which is much simpler than the technique presented by \citet{kawaguchi2016deep}.

\section{Analysis of deep linear networks}
In this section, we provide proofs for Theorems~\ref{thm:linfullrank} and \ref{thm:lindefirank}. 

\subsection{Solutions of the relaxed problem}
We first analyze the globally optimal solution of a ``relaxation'' of $L(W)$,
which turns out to be very useful while proving Theorems~\ref{thm:linfullrank} and \ref{thm:lindefirank}.
Consider the relaxed risk function
\begin{equation*}
    L_0(R) = \frac{1}{2} \nfro{RX - Y}^2,
\end{equation*}
where $R \in \reals^{d_y \times d_x}$ and $\rank(R) \leq k$. 
For any $W$, the product $W_{H+1} W_H\cdots W_1$ has rank at most $k$ 
and setting $R$ to be this product gives the same loss values: $L_0(W_{H+1} W_H\cdots W_1) = L(W)$.
Therefore, $L_0$ is a relaxation of $L$ and 
\begin{equation*}
    \inf_{R: \rank(R) \leq k} L_0(R) \leq \inf_W L(W).
\end{equation*}
This means that if there exists $W$ such that $L(W) = \inf_{R:\rank(R)\leq k} L_0(R)$, 
then $W$ is a global minimum of the function $L$.
This observation is very important in proofs; we will show that inside certain sets,
any critical point $W$ of $L(W)$ must satisfy $R^* = W_{H+1} \cdots W_1$, where $R^*$ is a global optimum of $L_0(R)$.
This proves that $L(W) = L_0(R^*) = \inf_{R:\rank(R)\leq k} L_0(R)$, thus showing that $W$ is a global minimum of $L$.

By restating this observation as an optimization problem, the solution of problem in \eqref{eq:origprob} is bounded below 
by the minimum value of the following:
\begin{equation}
\label{eq:relaxedprob}
\begin{array}{ll}
\minimize & \frac{1}{2} \nfro{RX - Y}^2 \\
\subjectto & \rank(R) \leq k.
\end{array}
\end{equation}
In case where $k = \min\{d_x, d_y\}$, \eqref{eq:relaxedprob} is actually an unconstrained optimization problem.
Note that $L_0$ is a convex function of $R$, so any critical point is a global minimum.
By differentiating and setting the derivative to zero, we can easily get the unique globally optimal solution 
\begin{equation}
\label{eq:fullranksol}
    R^* = YX^T(XX^T)^{-1}.
\end{equation}
In case of $k < \min\{d_x, d_y\}$, the problem becomes non-convex because of the rank constraint, 
but its exact solution can still be computed easily. We present the solution of this case as a proposition 
and defer the proof to Appendix~\ref{apdx:proofs} due to its technicalities.
\begin{prop}
\label{prop:lowranksol}
Suppose $k < \min\{d_x, d_y\}$. Then the optimal solution to \eqref{eq:relaxedprob} is 
\begin{equation}
\label{eq:lowranksol}
    R^* = \hat U \hat U^T YX^T(XX^T)^{-1},
\end{equation}
which is the orthogonal projection of $YX^T(XX^T)^{-1}$ onto the column space of $\hat U$.
\end{prop}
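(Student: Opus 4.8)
My plan is to recognize \eqref{eq:relaxedprob} as a reduced-rank regression problem and carry the argument out in three moves: decouple the unconstrained least-squares fit from the rank-constrained part of the residual, whiten the data so that what remains is a plain low-rank matrix approximation, and then invoke the Eckart--Young--Mirsky theorem.

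First I would set $N \defeq YX^T(XX^T)^{-1}$, the unconstrained minimizer of $L_0$, and split the residual as $RX - Y = (R-N)X + (NX - Y)$. A direct computation shows the cross term vanishes,
\begin{equation*}
\dotprod{(R-N)X}{NX-Y} = \tr\!\big((R-N)(XX^TN^T - XY^T)\big) = 0,
\end{equation*}
because $XX^TN^T = XY^T$ by the definition of $N$. Hence $\nfro{RX-Y}^2 = \nfro{(R-N)X}^2 + \nfro{NX-Y}^2$, and minimizing $L_0$ subject to $\rank(R)\le k$ is equivalent to minimizing $\nfro{(R-N)X}^2$ subject to the same constraint.

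Next I would whiten: let $S \defeq (XX^T)^{1/2}$, which is symmetric positive definite (since $XX^T$ is full rank) and hence invertible, so that $\nfro{(R-N)X}^2 = \nfro{(R-N)S}^2$. The substitution $\tilde R \defeq RS$ is a bijection on the set of matrices of rank at most $k$ (multiplication by an invertible matrix preserves rank), so the problem becomes: minimize $\nfro{\tilde R - NS}^2$ over $\rank(\tilde R)\le k$. To read off the answer I would identify the singular value decomposition of $NS$: since $(NS)(NS)^T = YX^T(XX^T)^{-1}XY^T = (NX)(NX)^T$, the matrix $NS$ has the same left singular vectors $U$ and singular values $\Sigma$ as $NX = YX^T(XX^T)^{-1}X = U\Sigma V^T$. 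Because $k < \min\{d_x,d_y\} = \rank(NX)$ and the singular values of $NX$ are distinct, we have $\sigma_k > \sigma_{k+1}\ge 0$, so by Eckart--Young--Mirsky the unique best rank-$k$ Frobenius approximation of $NS$ is the projection $\hat U\hat U^T NS$ onto its top-$k$ left singular subspace. Undoing the substitution gives $R^* = (\hat U\hat U^T NS)S^{-1} = \hat U\hat U^T N = \hat U\hat U^T YX^T(XX^T)^{-1}$, which has rank at most $k$ and is exactly the orthogonal projection of $YX^T(XX^T)^{-1}$ onto $\colsp(\hat U)$ since $\hat U$ has orthonormal columns.

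I expect the only delicate points to be bookkeeping around the rank constraint and uniqueness: one must verify that $\tilde R\mapsto \tilde R S^{-1}$ really maps rank-$\le k$ matrices bijectively onto rank-$\le k$ matrices, and observe that the distinctness hypothesis on the singular values is precisely what promotes Mirsky's theorem from producing ``a minimizer'' to producing ``the minimizer,'' ruling out a continuum of optimal low-rank projections. Everything else is routine linear algebra.
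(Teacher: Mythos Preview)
Your argument is correct and follows essentially the same three-step plan as the paper's proof: a Pythagorean split around the unconstrained least-squares solution $N=YX^T(XX^T)^{-1}$, reduction to a best rank-$k$ approximation, and Eckart--Young. The one substantive difference is that the paper stays with $\nfro{RX-NX}^2$ and asserts directly that the optimum is attained when $RX$ equals the rank-$k$ truncation of $NX$, whereas you insert the whitening $S=(XX^T)^{1/2}$ so that Eckart--Young applies without having to argue separately that every rank-$\le k$ matrix with rows in $\rowsp(X)$ is realizable as $RX$; your route is slightly more self-contained for that reason, but the two arguments are otherwise the same.
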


\subsection{Partial derivatives of $L(W)$}

By simple matrix calculus, we can calculate the derivatives of $L(W)$ with respect to $W_i$, for $i = 1, \dots, H+1$.
We present the result as the following lemma, and defer the details to Appendix~\ref{apdx:proofs}.
\begin{lemma}
\label{lem:lingrad}
The partial derivative of $L(W)$ with respect to $W_i$ is given as
\begin{equation}
\label{eq:grad}
    \frac{\partial L}{\partial W_i} = W_{i+1}^T \cdots W_{H+1}^T ( W_{H+1}W_{H}\cdots W_{1} X - Y ) X^T W_1^T \cdots W_{i-1}^T,
\end{equation}
for $i = 1, \dots, H+1$.
\end{lemma}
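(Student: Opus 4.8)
The plan is to reduce the computation to a single elementary fact about quadratic functions of a matrix variable. Fix $i \in \{1, \dots, H+1\}$ and freeze all the other weight matrices. Writing $A \defeq W_{H+1}\cdots W_{i+1}$ and $B \defeq W_{i-1}\cdots W_1 X$, with the convention that an empty product of matrices equals the identity (so that $A = I_{d_y}$ when $i = H+1$ and $B = X$ when $i = 1$), the network output is the affine function $W_i \mapsto A W_i B$ of $W_i$, and hence
\[
L(W) = \tfrac12 \nfro{A W_i B - Y}^2 .
\]
So it suffices to show that for $f(M) = \tfrac12\nfro{AMB - Y}^2$ one has $\nabla_M f = A^T(AMB - Y)B^T$; substituting $A$, $B$, and $M = W_i$ then yields \eqref{eq:grad}.

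To establish this, I would perturb $M$ in an arbitrary direction $\Delta$ and expand, using the Frobenius inner product $\dotprod{P}{Q} = \tr(P^T Q)$:
\[
f(M + t\Delta) = \tfrac12\nfro{(AMB - Y) + t\, A\Delta B}^2 = f(M) + t\,\dotprod{AMB - Y}{A\Delta B} + \tfrac{t^2}{2}\nfro{A\Delta B}^2 .
\]
Reading off the coefficient of $t$ gives the directional derivative $\dotprod{AMB - Y}{A\Delta B}$, and by cyclicity of the trace this equals $\tr\!\big(B(AMB-Y)^T A\Delta\big) = \dotprod{A^T(AMB - Y)B^T}{\Delta}$. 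Since $\Delta$ was arbitrary, $\nabla_M f = A^T(AMB - Y)B^T$, as claimed. Transposing the products $A = W_{H+1}\cdots W_{i+1}$ and $B = W_{i-1}\cdots W_1 X$ then recovers exactly $W_{i+1}^T\cdots W_{H+1}^T(W_{H+1}\cdots W_1 X - Y)X^T W_1^T\cdots W_{i-1}^T$.

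There is no genuine obstacle here; the computation is routine matrix calculus. The only point requiring a little care is the treatment of the boundary indices $i = 1$ and $i = H+1$, where one of the flanking products is empty; this is handled uniformly by the empty-product-equals-identity convention above, and one can double-check that the resulting formula agrees with differentiating $\tfrac12\nfro{\,\cdot\, X - Y}^2$ directly in those two cases.
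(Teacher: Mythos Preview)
Your proposal is correct and follows essentially the same approach as the paper: perturb $W_i$ by $\Delta$, expand the squared Frobenius norm, and read off the linear-in-$\Delta$ term via the trace inner product and cyclicity. The only cosmetic difference is that you first abstract to $f(M)=\tfrac12\nfro{AMB-Y}^2$ and then substitute back, whereas the paper carries the full product through the computation directly.
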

This result will be used throughout the proof of Theorems~\ref{thm:linfullrank} and \ref{thm:lindefirank}.
For clarity in notation, note that when $i=1$, $W_1^T \cdots W_0^T$ is just an identity matrix in $\reals^{d_x \times d_x}$. 
Similarly, when $i=H+1$, $W_{H+2}^T \cdots W_{H+1}^T$ is an identity matrix in $\reals^{d_y \times d_y}$. 

We also state an elementary lemma which proves useful in our proofs, whose proof we defer to Appendix~\ref{apdx:proofs}.
\begin{lemma}
\label{lem:frolb}
1. For any $A \in \reals^{m \times n}$ and $B \in \reals^{n \times l}$ where $m \geq n$, 
\begin{equation*}
\nfro{AB}^2 \geq \sigmin^2 (A) \nfro{B}^2.
\end{equation*}
2. For any $A \in \reals^{m \times n}$ and $B \in \reals^{n \times l}$ where $n \leq l$, 
\begin{equation*}
\nfro{AB}^2 \geq \sigmin^2 (B) \nfro{A}^2.
\end{equation*}
\end{lemma}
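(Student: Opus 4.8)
The plan is to prove Part 1 by reducing to a pointwise bound on columns, and then obtain Part 2 from Part 1 by transposition. For Part 1, the key observation is the scalar inequality $\nfro{Av}^2 \ge \sigmin^2(A)\nfro{v}^2$ valid for \emph{every} $v \in \reals^n$ when $m \ge n$. To see this, write a (thin) singular value decomposition $A = U\Sigma V^T$ with $V \in \reals^{n\times n}$ orthogonal; since $m \ge n$, the columns $v_1,\dots,v_n$ of $V$ form an orthonormal basis of $\reals^n$, so expanding $v = \sum_{i=1}^n c_i v_i$ gives $\nfro{Av}^2 = \sum_{i=1}^n \sigma_i^2 c_i^2 \ge \sigmin^2(A)\sum_{i=1}^n c_i^2 = \sigmin^2(A)\nfro{v}^2$, where we used that $\sigmin(A)=\sigma_n(A)$ is the smallest of the $n$ singular values.

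With this in hand, I would write $B = [\,b_1\ \cdots\ b_l\,]$ in terms of its columns $b_j \in \reals^n$ and simply sum the pointwise bound: $\nfro{AB}^2 = \sum_{j=1}^l \nfro{Ab_j}^2 \ge \sigmin^2(A)\sum_{j=1}^l \nfro{b_j}^2 = \sigmin^2(A)\nfro{B}^2$, which is Part 1. For Part 2, I would apply Part 1 to the transposed matrices $B^T \in \reals^{l\times n}$ and $A^T \in \reals^{n\times m}$: here the hypothesis $n \le l$ plays exactly the role that $m \ge n$ played before, so Part 1 yields $\nfro{B^TA^T}^2 \ge \sigmin^2(B^T)\nfro{A^T}^2$. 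Using the invariance of the Frobenius norm and of the smallest singular value under transposition, i.e.\ $\nfro{M}=\nfro{M^T}$ and $\sigmin(M)=\sigmin(M^T)$, this is precisely $\nfro{AB}^2 \ge \sigmin^2(B)\nfro{A}^2$.

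This lemma is elementary and there is no genuine obstacle; the only point that deserves care is the role of the dimension hypotheses. The inequality $\nfro{Av}^2 \ge \sigmin^2(A)\nfro{v}^2$ can fail for an arbitrary vector when $A$ has a nontrivial null space (e.g.\ if $m < n$), because then the smallest singular value in the usual convention is $\sigma_{\min(m,n)}(A)$ and the right singular vectors span only a proper subspace of $\reals^n$. The assumption $m \ge n$ in Part 1 (respectively $n \le l$ in Part 2) is exactly what guarantees that the relevant orthonormal family of singular vectors spans the whole domain, so that $\sigmin$ lower-bounds the action of the matrix on \emph{every} vector, and hence on every column of $B$ (respectively every row of $A$).
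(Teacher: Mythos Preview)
Your proof is correct and rests on the same underlying fact as the paper's: when $m\ge n$ one has $A^TA \succeq \sigmin^2(A)\,I$, which is exactly your pointwise inequality $\ltwo{Av}^2 \ge \sigmin^2(A)\ltwo{v}^2$. The paper packages this slightly more compactly by taking traces directly---$\nfro{AB}^2 = \tr(B^TA^TAB) \ge \sigmin^2(A)\tr(B^TB)$---rather than summing over columns, and for Part~2 it repeats the argument with $BB^T \succeq \sigmin^2(B)\,I$ instead of your transposition trick, but these are cosmetic differences.
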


\subsection{Proof of Theorem~\ref{thm:linfullrank}}
We prove Theorem~\ref{thm:linfullrank}, which addresses the case $k = \min\{d_x, d_y\}$.
First, recall that the set defined in Theorem~\ref{thm:linfullrank} is
\begin{equation*}
    \mc V_1 \defeq
        \left \{ \mc
            (W_1, \dots, W_{H+1}) : 
            \rank(W_{H+1} \cdots W_1) = k
        \right \}.        
\end{equation*}
As seen in \eqref{eq:fullranksol}, the unique minimum point of $L_0$ has rank $k$.
So, no point $W \in \mc V_1^c$ can be a global minimum of $L$.
Therefore, by \citet[Theorem~2.3.(iii)]{kawaguchi2016deep} and Lemma~\ref{lem:saddle}, 
any critical point in $\mc V_1^c$ must be a saddle point.

For the rest of our proof, we need to consider two cases: $d_y \leq d_x$ and $d_x \leq d_y$. 
If $d_x = d_y$, both cases work.
The outline of the proof is as follows: 
we define a new set $\mc{W}_\epsilon$,
show that any critical point in the set $\mc{W}_\epsilon$ is a global minimum, 
and then show that every $W \in \mc V_1$ is also in $\mc{W}_\epsilon$ for some $\epsilon > 0$.
This proves that any critical point of $L(W)$ in $\mc V_1$ is also
a critical point in $\mc{W}_\epsilon$ for some $\epsilon > 0$, hence a global minimum.

The following proposition proves the first step:
\begin{prop}
Assume that $k = \min\{d_x, d_y\}$. For any $\epsilon > 0$, define the following set:
\begin{equation*}
    \mc{W}_\epsilon \defeq
        \begin{cases}
            \left \{ 
                (W_1, \dots, W_{H+1}) : 
                \sigmin(W_{H+1} \cdots W_{2}) \geq \epsilon
            \right \}, & \textup{if } d_y \leq d_x, \\
            \left \{ 
                (W_1, \dots, W_{H+1}) : 
                \sigmin(W_{H} \cdots W_{1}) \geq \epsilon
            \right \}, & \textup{if } d_x \leq d_y.
        \end{cases}
\end{equation*}
Then any critical point of $L(W)$ in $\mc{W}_\epsilon$ is a global minimum point.
\end{prop}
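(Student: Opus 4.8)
The plan is to extract everything we need from the vanishing of a \emph{single} partial derivative --- the one with respect to the weight matrix at the narrower end of the network --- and to combine it with the defining inequality of $\mc{W}_\epsilon$ via Lemma~\ref{lem:frolb}. Concretely, I plan to show that any critical point $W\in\mc{W}_\epsilon$ must satisfy $W_{H+1}\cdots W_1 = R^*$, where $R^* = YX^T(XX^T)^{-1}$ is the unique unconstrained minimizer of the relaxed loss $L_0$ found in \eqref{eq:fullranksol}. Once that identity is established the proposition follows at once, since then $L(W) = L_0(W_{H+1}\cdots W_1) = L_0(R^*) = \inf_{R:\,\rank(R)\le k} L_0(R) \le \inf_W L(W)$, so $W$ is a global minimum of $L$.

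For $d_y\le d_x$ (so $k=d_y$, and since $k$ is the smallest width over \emph{all} layers, $d_i\ge d_y$ for every $i$, in particular $d_1\ge d_y$), I would set $\partial L/\partial W_1 = 0$ in Lemma~\ref{lem:lingrad}; writing $E\defeq W_{H+1}\cdots W_1 X - Y$ and $P\defeq W_{H+1}\cdots W_2\in\reals^{d_y\times d_1}$, this reads $P^T E X^T = 0$. Applying Lemma~\ref{lem:frolb}(1) with the tall matrix $A = P^T\in\reals^{d_1\times d_y}$ (legitimate because $d_1\ge d_y$) and $B = E X^T$, the crucial inequality reads
\[
0 = \nfro{P^T E X^T}^2 \;\ge\; \sigmin^2(P^T)\,\nfro{E X^T}^2 \;=\; \sigmin^2(P)\,\nfro{E X^T}^2 \;\ge\; \epsilon^2\,\nfro{E X^T}^2 ,
\]
the last step using $W\in\mc{W}_\epsilon$. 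Hence $E X^T = 0$, i.e.\ $(W_{H+1}\cdots W_1)XX^T = YX^T$, and since $XX^T$ is invertible, $W_{H+1}\cdots W_1 = YX^T(XX^T)^{-1} = R^*$.

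The case $d_x\le d_y$ (so $k=d_x$, hence $d_i\ge d_x$ for all $i$, in particular $d_H\ge d_x$) is symmetric: I would instead use $\partial L/\partial W_{H+1} = 0$, which reads $E X^T Q^T = 0$ with $Q\defeq W_H\cdots W_1\in\reals^{d_H\times d_x}$, and invoke Lemma~\ref{lem:frolb}(2) with $A = E X^T\in\reals^{d_y\times d_x}$ and the wide matrix $B = Q^T\in\reals^{d_x\times d_H}$ (legitimate because $d_x\le d_H$); using $\sigmin(Q^T)=\sigmin(Q)$ and $W\in\mc{W}_\epsilon$ this again forces $E X^T = 0$ and therefore $W_{H+1}\cdots W_1 = R^*$. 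When $d_x = d_y$ either argument applies.

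I do not anticipate a genuine obstacle. The one place requiring care is checking the dimension hypotheses of Lemma~\ref{lem:frolb} --- $d_1\ge d_y$ in the first case, $d_H\ge d_x$ in the second --- which is precisely where the standing assumption $k=\min\{d_x,d_y\}$ enters: it guarantees that every hidden layer is at least as wide as the narrower side of the network, so that the ``amputated'' product $P$ (resp.\ $Q$) is tall (resp.\ wide) enough for its smallest singular value to control $\nfro{E X^T}$. Beyond that, the argument only relies on properties of the relaxed problem $L_0$ that are already established.
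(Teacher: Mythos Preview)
Your proposal is correct and follows essentially the same argument as the paper: in each case you use the single partial derivative at the narrow end ($\partial L/\partial W_1$ when $d_y\le d_x$, $\partial L/\partial W_{H+1}$ when $d_x\le d_y$), apply the appropriate part of Lemma~\ref{lem:frolb} under the dimension hypothesis guaranteed by $k=\min\{d_x,d_y\}$, and conclude $EX^T=0$ so that $W_{H+1}\cdots W_1 = YX^T(XX^T)^{-1}=R^*$. The notation differs slightly, but the logic is identical.
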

\begin{proof}
(If $d_y \leq d_x$)
Consider \eqref{eq:grad} in the case of $i=1$. 
We can observe that $W_{2}^T \cdots W_{H+1}^T \in \reals^{d_1 \times d_y}$
and that $d_1 \geq d_y$. Then by Lemma~\ref{lem:frolb}.1,
\begin{align*}
    \nfro{\frac{\partial L}{\partial W_1}}^2 
    &\geq \sigmin^2 (W_{H+1} \cdots W_{2})
            \nfro{( W_{H+1}W_{H}\cdots W_{1} X - Y ) X^T}^2\\
    &\geq \epsilon^2
            \nfro{( W_{H+1}W_{H}\cdots W_{1} X - Y ) X^T}^2.
\end{align*}
By the above inequality, any critical point in $\mc W$ satisfies
\begin{equation*}
    \forall i, \frac{\partial L}{\partial W_i}= 0
    \Rightarrow ( W_{H+1}W_{H}\cdots W_{1} X - Y ) X^T = 0,
\end{equation*}
which means that 
    $W_{H+1}W_{H}\cdots W_{1} = Y X^T (XX^T)^{-1}$.
The product is the unique globally optimal solution \eqref{eq:fullranksol}
of the relaxed problem in \eqref{eq:relaxedprob},
so $W$ is a global minimum point of $L$. 

(If $d_x \leq d_y$)
Consider \eqref{eq:grad} for $i=H+1$. 
We can observe that $W_{1}^T \cdots W_{H}^T \in \reals^{d_x \times d_H}$
and that $d_x \leq d_H$. Then by Lemma~\ref{lem:frolb}.2,
\begin{align*}
    \nfro{\frac{\partial L}{\partial W_{H+1}}}^2 
    &\geq \epsilon^2
            \nfro{( W_{H+1}W_{H}\cdots W_{1} X - Y ) X^T}^2,
\end{align*}
and the rest of the proof flows in a similar way as the previous case.
\end{proof}
The next proposition proves the theorem:
\begin{prop}
For any point $W \in \mc V_1$, there exists an $\epsilon > 0$ such that $W \in \mc W_\epsilon$.
\end{prop}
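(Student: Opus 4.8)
The plan is to take $\epsilon$ to be the smallest singular value of the appropriate partial product of weight matrices and to show it is strictly positive by a rank-monotonicity argument.

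First consider the case $d_y \le d_x$, where $k = \min\{d_x, d_y\} = d_y$. I would use the hypothesis $k = \min_{i\in\{0,\dots,H+1\}} d_i$ to note that every width satisfies $d_i \ge d_y$, in particular $d_1 \ge d_y$. Writing $P \defeq W_{H+1}\cdots W_2 \in \reals^{d_y \times d_1}$ so that $W_{H+1}\cdots W_1 = P W_1$, and using $W \in \mc V_1$, I have $\rank(P W_1) = d_y$; since the rank of a product is bounded by the rank of each factor, $d_y = \rank(P W_1) \le \rank(P) \le d_y$, hence $\rank(P) = d_y$. Because $P$ has $d_y \le d_1$ rows, $\sigmin(P)$ is its $d_y$-th (smallest) singular value, which is nonzero precisely because $P$ has full row rank. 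So $\epsilon \defeq \sigmin(W_{H+1}\cdots W_2) > 0$ works, and $W \in \mc W_\epsilon$.

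The case $d_x \le d_y$ is handled symmetrically: now $k = d_x$, so every $d_i \ge d_x$ and in particular $d_H \ge d_x$; setting $Q \defeq W_H\cdots W_1 \in \reals^{d_H \times d_x}$ gives $W_{H+1}\cdots W_1 = W_{H+1} Q$, and from $\rank(W_{H+1}Q) = d_x$ together with rank monotonicity I get $\rank(Q) = d_x$, i.e.\ $Q$ has full column rank and $\sigmin(Q) > 0$; then $\epsilon = \sigmin(W_H\cdots W_1)$ works. When $d_x = d_y$ either argument applies, and in any degenerate shallow case I would fall back on the convention (already in force after Lemma~\ref{lem:lingrad}) that an empty product of weight matrices is the identity.

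I do not expect a real obstacle here; the one point requiring care is invoking $k = \min_i d_i = \min\{d_x, d_y\}$ to ensure the ``inner'' dimension ($d_1$, resp.\ $d_H$) is at least $k$, so that full rank of the partial product is equivalent to its smallest singular value being positive. Combined with the previous proposition, this finishes Theorem~\ref{thm:linfullrank}: every critical point in $\mc V_1$ lies in some $\mc W_\epsilon$, hence is a global minimum.
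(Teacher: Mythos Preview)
Your proposal is correct and is essentially the same argument as the paper's, just phrased directly rather than by contrapositive: the paper defines the ``limit'' set $\mc W$ and shows $\mc W^c \subset \mc V_1^c$ via the same rank-monotonicity step $\rank(W_{H+1}\cdots W_1) \le \rank(W_{H+1}\cdots W_2)$ (resp.\ $\le \rank(W_H\cdots W_1)$), then takes $\epsilon$ to be the smallest singular value of the partial product. Your remark about needing $d_1 \ge d_y$ (resp.\ $d_H \ge d_x$) so that full rank is equivalent to $\sigmin > 0$ is exactly the point, and it follows from $k = \min_i d_i = \min\{d_x,d_y\}$ as you note.
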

\begin{proof}
Define a new set $\mc W$, a ``limit'' version (as $\epsilon \rightarrow 0$) of $\mc W_\epsilon$, as
\begin{equation*}
    \mc W  \defeq \begin{cases}
            \left \{ 
                (W_1, \dots, W_{H+1}) : 
                \rank(W_{H+1} \cdots W_{2}) = d_y
            \right \}, & \textup{if } d_y \leq d_x, \\
            \left \{ 
                (W_1, \dots, W_{H+1}) : 
                \rank(W_{H} \cdots W_{1}) = d_x
            \right \}, & \textup{if } d_x \leq d_y.
        \end{cases}
\end{equation*}
We show that $\mc V_1 \subset \mc W$ by showing that $\mc W^c \subset \mc V_1^c$.
Consider 
\begin{equation*}
    \mc W^c = \begin{cases}
            \left \{ 
                (W_1, \dots, W_{H+1}) : 
                \rank(W_{H+1} \cdots W_{2}) < d_y
            \right \}, & \textup{if } d_y \leq d_x, \\
            \left \{ 
                (W_1, \dots, W_{H+1}) : 
                \rank(W_{H} \cdots W_{1}) < d_x
            \right \}, & \textup{if } d_x \leq d_y.
        \end{cases}
\end{equation*}
Then any $W \in \mc W^c$ must have $\rank(W_{H+1} \cdots W_{1}) < \min\{d_x, d_y\} = k$,
so $W \in \mc V_1^c$.
Thus, any $W \in \mc V_1$ is also in $\mc W$, so either
$\rank(W_{H+1} \cdots W_{2}) = d_y$ or $\rank(W_{H} \cdots W_{1}) = d_x$,
depending on the cases.
Then, we can set 
\begin{equation*}
    \epsilon = \begin{cases}
            \sigmin(W_{H+1} \cdots W_{2}), & \textup{if } d_y \leq d_x, \\
            \sigmin(W_{H} \cdots W_{1}), & \textup{if } d_x \leq d_y.
        \end{cases}
\end{equation*}
We always have $\epsilon>0$ because the matrices are full rank, and we can see that $W \in \mc W_\epsilon$.
\end{proof}


\subsection{Proof of Theorem~\ref{thm:lindefirank}}
In this section we prove Theorem~\ref{thm:lindefirank},
which tackles the case $k < \min\{d_x, d_y\}$. 
Note that this assumption also implies that $1 \leq p \leq H$.

As for the proof of Theorem~\ref{thm:linfullrank}, define 
\begin{equation*}
    \mc V_1 \defeq 
        \left \{ \mc
            (W_1, \dots, W_{H+1}) : 
            \rank(W_{H+1} \cdots W_1) = k
        \right \}.        
\end{equation*}
The globally optimal point of the relaxed problem \eqref{eq:relaxedprob} has rank $k$,
as seen in \eqref{eq:lowranksol}. Thus, any point outside of $\mc V_1$ cannot be a global minimum.
Then, by \citet[Theorem~2.3.(iii)]{kawaguchi2016deep} and Lemma~\ref{lem:saddle}, 
it follows that any critical point in $\mc V_1^c$ must be a saddle point.
The remaining proof considers points in $\mc V_1$.

For this section, let us introduce some additional notations to ease presentation.
Define 
\begin{align*}
    &E \defeq (W_{H+1} \cdots W_1 X - Y)X^T \in \reals^{d_y \times d_x},\\
    &A_i \defeq W_{i+1}^T \cdots W_{H+1}^T \in \reals^{d_i \times d_y},~
      B_i \defeq W_{1}^T \cdots W_{i-1}^T \in \reals^{d_x \times d_{i-1}}, & i = 1, \dots, H+1,
\end{align*}
so that $\frac{\partial L}{\partial W_i} = A_i E B_i$.
Notice that $A_{H+1}$ and $B_1$ are identity matrices.

Now consider any tuple $W \in \mc V_1$. 
Since the full product $W_{H+1} \cdots W_1$ has rank $k$, any partial products $A_i$ and $B_i$ must have
    $\rank(A_i) \geq k \textup{ and } \rank (B_i) \geq k$,
for all $i$.
Then, consider $A_p \in \reals^{k \times d_y}$ and $B_{p+1} \in \reals^{d_x \times k}$.
Since $\rank(A_p) \leq k$ and $\rank(B_{p+1}) \leq k$, we can see that $\rank(A_p) = \rank(B_{p+1}) = k$.
Also, notice that $A_i = W_{i+1} A_{i+1}$ and $B_{i+1} = B_i W_i$, so that
\begin{equation*}
    \rank(A_1) \leq \rank(A_2) \leq \cdots \leq \rank(A_p) \text{ and }
    \rank(B_{H+1}) \leq \rank(B_H) \leq \cdots \leq \rank(B_{p+1}).
\end{equation*}
However, we have $k \leq \rank(A_1)$ and $k \leq \rank(B_{H+1})$, so the ranks are all identically $k$. 
Also,
\begin{equation*}
    \rowsp(A_1) \subset \rowsp(A_2) \subset \cdots \subset \rowsp(A_p) \text{ and }
    \colsp(B_{H+1}) \subset \colsp(B_H) \subset \cdots \subset \colsp(B_{p+1}),
\end{equation*}
but it was just shown that the these spaces have the same dimensions, which equals $k$, meaning
\begin{equation*}
    \rowsp(A_1) = \rowsp(A_2) = \cdots = \rowsp(A_p) \text{ and }
    \colsp(B_{H+1}) = \colsp(B_H) = \cdots = \colsp(B_{p+1}).
\end{equation*}
Using this observation, we can now state a proposition showing necessary and sufficient conditions
for a tuple $W \in \mc V_1$ to be a critical point of $L(W)$.
\begin{prop}
\label{prop:critcond}
A tuple $W \in \mc V_1$ is a critical point of $L$ if and only if $A_pE=0$ and $EB_{p+1}=0$.
\end{prop}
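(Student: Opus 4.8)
The plan is to prove both directions of the equivalence by exploiting the chain of rank and subspace equalities established just before the proposition, which tells us that for $W \in \mc V_1$ all the matrices $A_1, \dots, A_p$ share a common $k$-dimensional row space, and all of $B_{H+1}, \dots, B_{p+1}$ share a common $k$-dimensional column space. The ``if'' direction is immediate: if $A_p E = 0$, then since $A_i$ has the same row space as $A_p$ for every $i \le p$, we get $A_i E = 0$ for all $i \le p$; similarly $E B_{p+1} = 0$ gives $E B_{i} = 0$ for all $i \ge p+1$. Combining, $\frac{\partial L}{\partial W_i} = A_i E B_i = 0$ for every $i$ (for $i \le p$ the left factor kills it, for $i \ge p+1$ the right factor does), so $W$ is a critical point.

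For the ``only if'' direction, suppose $W \in \mc V_1$ is a critical point, so $A_i E B_i = 0$ for all $i$. The idea is to pick the index where the relevant partial product is as ``non-degenerate'' as possible and cancel it. Specifically, I would look at $i = p$: here $A_p E B_p = 0$ with $\rank(A_p) = k$. Since $A_p \in \reals^{k \times d_y}$ has rank $k$, it has a left inverse (equivalently $A_p A_p^T$ is invertible), so from $A_p (E B_p) = 0$ we conclude $E B_p = 0$. Now I need to bootstrap from $E B_p = 0$ to $E B_{p+1} = 0$: since $B_{p+1} = B_p W_p$, we immediately get $E B_{p+1} = E B_p W_p = 0$. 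Symmetrically, from $i = p+1$ we have $A_{p+1} E B_{p+1} = 0$ with $\rank(B_{p+1}) = k$ and $B_{p+1} \in \reals^{d_x \times k}$ of full column rank, so $B_{p+1}$ has a right inverse and we can cancel it on the right to get $A_{p+1} E = 0$; then $A_p = W_{p+1} A_{p+1}$ gives $A_p E = W_{p+1} A_{p+1} E = 0$. So from criticality we have derived both $A_p E = 0$ and $E B_{p+1} = 0$, as desired.

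Let me double check the rank bookkeeping, since that is the one place an off-by-one could hide. We have $A_p = W_{p+1}^T \cdots W_{H+1}^T \in \reals^{d_p \times d_y} = \reals^{k \times d_y}$, and $B_{p+1} = W_1^T \cdots W_p^T \in \reals^{d_x \times d_p} = \reals^{d_x \times k}$. The preamble established $\rank(A_p) = \rank(B_{p+1}) = k$ for $W \in \mc V_1$, so $A_p$ is a wide matrix of full row rank and $B_{p+1}$ is a tall matrix of full column rank, which is exactly what makes the one-sided cancellations legitimate. One subtlety worth noting explicitly: $B_p \in \reals^{d_x \times d_{p-1}}$ and $A_{p+1} \in \reals^{d_{p+1} \times d_y}$ need not have rank $k$ on their own if $d_{p-1}$ or $d_{p+1}$ is smaller than $k$ --- but wait, $d_p = k$ is the \emph{minimum} width, so $d_{p-1}, d_{p+1} \ge k$, and in fact the chain argument before the proposition shows $\rank(A_{p+1}) = \rank(B_p) = k$ too; in any case we never need to invert $A_{p+1}$ or $B_p$, we only multiply by $W_p$ or $W_{p+1}$, so no full-rank hypothesis on those is required.

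The main obstacle, such as it is, is purely organizational: making sure the ``if'' direction correctly uses the fact that $A_i E = 0$ for \emph{all} $i \le p$ follows from $A_p E = 0$ via equality of row spaces (a matrix identity $A_i E = 0$ depends only on $\colsp(E^T) \subseteq \nulsp(A_i) = (\rowsp A_i)^\perp$, and $\rowsp A_i = \rowsp A_p$), and symmetrically for $B$. Everything else is one-line linear algebra. I would present the ``if'' direction first since it is the shorter and more conceptual half, then the ``only if'' direction using the cancellation trick at index $p$ and $p+1$.
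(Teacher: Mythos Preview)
Your ``if'' direction is correct and matches the paper's argument. The ``only if'' direction, however, contains a genuine error in the cancellation step.

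You write that since $A_p \in \reals^{k \times d_y}$ has rank $k$, ``it has a left inverse (equivalently $A_p A_p^T$ is invertible), so from $A_p (E B_p) = 0$ we conclude $E B_p = 0$.'' But $A_p A_p^T$ being invertible gives $A_p$ a \emph{right} inverse, not a left one: $A_p$ is a wide matrix with full \emph{row} rank, and since $k < d_y$ in this section, $A_p$ has a nontrivial null space. Hence $A_p(EB_p)=0$ does \emph{not} imply $EB_p=0$. The symmetric step has the same defect: $B_{p+1}\in\reals^{d_x\times k}$ has full \emph{column} rank and therefore a \emph{left} inverse, so from $A_{p+1}EB_{p+1}=0$ you cannot cancel $B_{p+1}$ on the right. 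In fact the stronger conclusions $EB_p=0$ and $A_{p+1}E=0$ need not hold at a critical point: the preamble only establishes $\colsp(B_{p+1})=\cdots=\colsp(B_{H+1})$, and $\colsp(B_p)$ may be strictly larger.

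The paper avoids this by looking at the \emph{extreme} indices rather than at $p$ and $p+1$. At $i=1$ one has $B_1=I$, so $A_1EB_1=0$ gives $A_1E=0$; then the row-space equality $\rowsp(A_1)=\rowsp(A_p)$ yields $A_pE=0$. Symmetrically, at $i=H+1$ one has $A_{H+1}=I$, so $EB_{H+1}=0$, and the column-space equality $\colsp(B_{H+1})=\colsp(B_{p+1})$ gives $EB_{p+1}=0$. Your argument can be repaired by making exactly this change of index.
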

\begin{proof}
(If part) 
$A_pE = 0$ implies that $\colsp(E) \subset \rowsp(A_p)^\perp = \cdots = \rowsp(A_1)^\perp$, so 
     $\frac{\partial L}{\partial W_i} = A_i E B_i = 0 \cdot B_i = 0$,
for $i = 1, \dots, p$. 
Similarly, $EB_{p+1} = 0$ implies $\rowsp(E) \subset \colsp(B_{p+1})^\perp = \cdots = \colsp(B_{H+1})^\perp$, 
so $\frac{\partial L}{\partial W_i} = A_i E B_i = A_i \cdot 0 = 0$ for $i = p+1, \dots, H+1$.

(Only if part)
We have $\frac{\partial L}{\partial W_i} = A_i E B_i = 0$ for all $i$. This means that
\begin{align*}
    \colsp(EB_i) \subset \rowsp(A_i)^\perp = \rowsp(A_p)^\perp & \textup{ for } i = 1, \dots, p\\
    \rowsp(A_iE) \subset \colsp(B_i)^\perp = \colsp(B_{p+1})^\perp & \textup{ for } i = p+1, \dots, H+1.
\end{align*}
Now recall that $B_1$ and $A_{H+1}$ are identity matrices, so 
$\colsp(E) \subset \rowsp(A_p)^\perp$ and $\rowsp(E) \subset \colsp(B_{p+1})^\perp$, which proves
$A_p E = 0$ and $E B_{p+1} = 0$.
\end{proof}
Now we present a proposition that specifies the necessary and sufficient condition 
in which a critical point of $L(W)$ in $\mc V_1$ is a global minimum.
Recall that when we take the SVD $YX^T(XX^T)^{-1}X = U\Sigma V^T$,
$\hat U \in \reals^{d_y \times k}$ is defined to be a matrix consisting of the first $k$ columns of $U$. 
\begin{prop}
\label{prop:optcond}
A critical point $W \in \mc V_1$ of $L(W)$ is a global minimum point if and only if $\colsp(W_{H+1}\cdots W_{p+1}) = \rowsp(A_p) = \colsp(\hat U)$.
\end{prop}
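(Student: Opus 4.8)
The plan is to show that, for a critical point $W \in \mc V_1$, membership in $\mc V_1$ together with the critical-point equation pins down the full product $W_{H+1}\cdots W_1$ exactly, and then to compare that product with the optimal solution $R^*$ of the relaxed problem. Throughout I would write $R^* \defeq \hat U\hat U^T YX^T(XX^T)^{-1}$ for the minimizer of \eqref{eq:relaxedprob} supplied by Proposition~\ref{prop:lowranksol}, recalling that $\rank(R^*) = k$ and, under the distinct-singular-values assumption, that $R^*$ is the \emph{unique} minimizer of \eqref{eq:relaxedprob}. I would also note at the outset that $W_{H+1}\cdots W_{p+1} = A_p^T$, so $\colsp(W_{H+1}\cdots W_{p+1}) = \rowsp(A_p)$ holds automatically; the real content of the proposition is therefore the equivalence between global optimality and $\rowsp(A_p) = \colsp(\hat U)$.

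The key step, and the one I expect to be the main obstacle, is the claim that for any critical point $W \in \mc V_1$ one has $W_{H+1}\cdots W_1 = F(F^TF)^{-1}F^T \, YX^T(XX^T)^{-1}$, where $F \defeq W_{H+1}\cdots W_{p+1} = A_p^T$; that is, the product is the orthogonal projection of $YX^T(XX^T)^{-1}$ onto the $k$-dimensional subspace $S \defeq \rowsp(A_p)$. To prove it I would start from Proposition~\ref{prop:critcond}, which gives $A_p E = 0$, i.e.\ $A_p(W_{H+1}\cdots W_1 XX^T - YX^T) = 0$. Writing $G \defeq W_p\cdots W_1 = B_{p+1}^T$, so that $W_{H+1}\cdots W_1 = FG$, this becomes $F^TFG\,XX^T = F^TYX^T$. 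Since $W \in \mc V_1$ forces $\rank(A_p) = k$ (established in the text preceding Proposition~\ref{prop:critcond}), $F$ has full column rank and $F^TF$ is invertible; $XX^T$ is invertible by assumption; so I can solve for $G$ and substitute to obtain the claimed formula. It then follows that $\colsp(W_{H+1}\cdots W_1) \subseteq S$, and since $\rank(W_{H+1}\cdots W_1) = k = \dim S$, in fact $\colsp(W_{H+1}\cdots W_1) = S$. (Only the half $A_p E = 0$ of Proposition~\ref{prop:critcond} is used here; the other half restricts which subspaces $S$ can occur at a critical point but is irrelevant to the present characterization.)

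With the projection identity in hand, both directions are short bookkeeping. For ``if'', suppose $\rowsp(A_p) = \colsp(\hat U)$; since $\hat U$ has orthonormal columns the projection onto $S = \colsp(\hat U)$ is $\hat U\hat U^T$, so the key step gives $W_{H+1}\cdots W_1 = \hat U\hat U^T YX^T(XX^T)^{-1} = R^*$, whence $L(W) = L_0(R^*)$ equals the minimum value of \eqref{eq:relaxedprob}, which lower-bounds $\inf_W L(W)$; hence $W$ is a global minimum. For ``only if'', suppose the critical point $W \in \mc V_1$ is a global minimum; then $L_0(W_{H+1}\cdots W_1) = L(W) = \inf_W L(W) = L_0(R^*)$ with $\rank(W_{H+1}\cdots W_1)\le k$, so uniqueness of the relaxed minimizer forces $W_{H+1}\cdots W_1 = R^*$. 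Combining with the key step, $S = \colsp(W_{H+1}\cdots W_1) = \colsp(R^*)$, and since $\colsp(R^*) \subseteq \colsp(\hat U)$ with $\rank(R^*) = k = \dim\colsp(\hat U)$, we get $\colsp(R^*) = \colsp(\hat U)$, i.e.\ $\rowsp(A_p) = \colsp(\hat U)$. All the substance sits in the projection identity of the second paragraph; everything after it is rank and column-space accounting.
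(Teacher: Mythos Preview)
Your proposal is correct and follows essentially the same route as the paper: both derive the projection identity $W_{H+1}\cdots W_1 = A_p^T(A_pA_p^T)^{-1}A_p\,YX^T(XX^T)^{-1}$ from $A_pE=0$ and the invertibility of $A_pA_p^T$ (your $F^TF$), then compare with $R^*=\hat U\hat U^T YX^T(XX^T)^{-1}$. The only cosmetic difference is in packaging the ``only if'' direction: the paper asserts directly that equality of the two projected matrices forces equality of the projection matrices (hence of the subspaces), whereas you invoke uniqueness of $R^*$ and then read off $\rowsp(A_p)=\colsp(W_{H+1}\cdots W_1)=\colsp(R^*)=\colsp(\hat U)$ via rank counting---your version is arguably the cleaner justification of that step.
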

\begin{proof}
Since $W$ is a critical point, by Proposition~\ref{prop:critcond} we have $A_pE = 0$. 
Also note from the definitions of $A_i$'s and $B_i$'s that $W_{H+1} \cdots W_1 = A_p^T B_{p+1}^T$, so
\begin{equation*}
    A_pE = A_p(A_p^T B_{p+1}^T X - Y)X^T = A_p A_p^T B_{p+1}^T X X^T - A_p Y X^T = 0.
\end{equation*}
Because $\rank(A_p) = k$, and $A_p A_p^T \in \reals^{k \times k}$ is invertible, so $B_{p+1}$ is determined uniquely as
\begin{equation*}
    B_{p+1}^T =  (A_p A_p^T)^{-1} A_p Y X^T (XX^T)^{-1},
\end{equation*}
thus
\begin{equation*}
    W_{H+1} \cdots W_1 = A_p^T B_{p+1}^T =  A_p^T (A_p A_p^T)^{-1} A_p Y X^T (XX^T)^{-1}.
\end{equation*}
Comparing this with \eqref{eq:lowranksol}, $W$ is a global minimum solution if and only if
\begin{equation*}
    \hat U \hat U^T YX^T(XX^T)^{-1} = W_{H+1} \cdots W_1 = A_p^T (A_p A_p^T)^{-1} A_p Y X^T (XX^T)^{-1}.
\end{equation*}
This equation holds if and only if $A_p^T (A_p A_p^T)^{-1} A_p = \hat U \hat U^T$, 
meaning that they are projecting $Y X^T (XX^T)^{-1}$ onto the same subspace. 
The projection matrix $A_p^T (A_p A_p^T)^{-1} A_p$ is onto $\rowsp(A_p)$, while $\hat U \hat U^T$ is onto $\colsp(\hat U)$.
From this, we conclude that $W$ is a global minimum point if and only if $\rowsp(A_p) = \colsp(\hat U)$.
\end{proof}
From Proposition~\ref{prop:optcond}, we can define the set $\mc V_2$ that appeared in Theorem~\ref{thm:lindefirank},
and conclude that every critical point of $L(W)$ in $\mc V_2$ is a global minimum, 
and any other critical points are saddle points.

\section{Extension to deep nonlinear neural networks}
In this section, we present some sufficient conditions for global optimality for deep nonlinear neural networks
via a function space view.
Given a smooth nonlinear function $h^*$ that maps input to output, \citet{bartlett2017deep}
described a method to decompose it into a number of smooth nonlinear functions 
$h^* = h_{H+1} \circ \cdots \circ h_1$ where $h_i$'s are close to identity.
Using Fr\'{e}chet derivatives of the population risk with respect to each function $h_i$,
they showed that when all $h_i$'s are close to identity, any critical point of the population risk is a global minimum.
One can see that these results are direct generalization of Theorems~2.1 and 2.2 of \citet{hardt2016identity}
to nonlinear networks and utilize the classical ``small gain" arguments often used in nonlinear analysis  and control~\citep{Khalil,Zames}.
Motivated by this result, we extended Theorem~\ref{thm:linfullrank} to
deep nonlinear neural networks and obtained sufficient conditions for global optimality in function space.

\subsection{Problem formulation and notation}
Suppose the data $X \in \reals^{d_x}$ and its corresponding label $Y \in \reals^{d_y}$ are drawn from some distribution.
Notice that in this section, $X$ and $Y$ are random vectors instead of matrices.
We want to predict $Y$ given $X$ with a deep nonlinear neural network that has $H$ hidden layers.
We express each layer of the network as functions $h_i : \reals^{d_{i-1}} \rightarrow \reals^{d_i}$,
so the entire network can be expressed as a composition of functions:
$h_{H+1} \circ h_H \circ \cdots \circ h_1$.
Our goal is to obtain functions $h_1, \dots, h_{H+1}$ that minimize the population risk functional:
\begin{equation*}
    L(h) = L(h_1, \dots, h_{H+1}) \defeq \frac{1}{2} \E \left [ \ltwo{h_{H+1} \circ \cdots \circ h_1(X) - Y}^2 \right ],
\end{equation*}
where $h$ is a shorthand notation for $(h_1, \dots, h_{H+1})$.
It is well-known that the minimizer of squared error risk is the conditional expectation of $Y$ given $X$,
which we will denote $h^*(x) = \E [Y \mid X = x]$.
With this, we can separate the risk functional into two terms
\begin{equation*}
    L(h) = \frac{1}{2} \E \left [ \ltwo{h_{H+1} \circ \cdots \circ h_1(X) - h^*(X)}^2 \right ] + C,
\end{equation*}
where the constant $C$ denotes the variance that is independent of $h_1, \dots, h_{H+1}$. 
Note that if $h_{H+1} \circ \cdots \circ h_1 = h^*$ almost surely, 
the first term in $L(h)$ vanishes and the optimal value $L^*$ of $L(h)$ is $C$.

\paragraph{Assumptions.}
Define the function spaces as the following:
\begin{align*}
    \mc F &\defeq \left \{ h: \reals^{d_x} \rightarrow \reals^{d_y} \mid h \textup{ is differentiable, } 
                                h(0) = 0, \textup{ and } \sup_x \frac{\ltwo{h(x)}}{\ltwo{x}} < \infty \right \},\\
    \mc F_i &\defeq \left \{ h: \reals^{d_{i-1}} \rightarrow \reals^{d_i} \mid h \textup{ is differentiable, } 
                                h(0) = 0, \textup{ and } \sup_x \frac{\ltwo{h(x)}}{\ltwo{x}} < \infty \right \},
\end{align*}
where $\mc F_i$ are defined for all $i = 1, \dots, H+1$.
Assume that $h^* \in \mc F$, and that we are optimizing $L(h)$ with $h_1 \in \mc F_1, \dots, h_{H+1} \in \mc F_{H+1}$.
In other words, the functions in $\mc F, \mc F_1, \dots, \mc F_{H+1}$ are differentiable and 
show sublinear growth starting from 0.
Notice that $h_{H+1} \circ \cdots \circ h_1 \in \mc F$, because a composition of differentiable functions is
also differentiable, and a composition of sublinear functions is also sublinear.
We also assume that $d_i \geq \min \{d_x, d_y\}$ for all $i = 1, \dots, H+1$,
which is identical to the assumption $k = \min \{d_x, d_y\}$ in Theorem~\ref{thm:linfullrank}.

\paragraph{Notation.}
To simplify multiple composition of functions, we denote $h_{i:j} = h_i \circ h_{i-1} \circ \cdots \circ h_{j+1} \circ h_j$.
As in the matrix case, $h_{0:1}$ and $h_{H+1:H+2}$ mean identity maps in $\reals^{d_x}$ and $\reals^{d_y}$, respectively.
Given a function $f$, let $J[f](x)$ be the Jacobian matrix of function $f$ evaluated at $x$.
Let $D_{h_i} [L(h)]$ be the Fr\'{e}chet derivative of $L(h)$ with respect to $h_i$ evaluated at $h$. 
The Fr\'{e}chet derivative $D_{h_i} [L(h)]$ is a linear functional that 
maps a function (direction) $\eta \in \mc F_i$ to a real number (directional derivative).

\subsection{Sufficient conditions for global optimality}
Here, we present two theorems which give sufficient conditions for 
a critical point ($D_{h_i} [L(h)] = 0$ for all $i$) in the function space to be a global optimum.
The proofs are deferred to Appendix~\ref{apdx:nonlin}.
\begin{theorem}
\label{thm:nonlin1}
Consider the case $d_x \geq d_y$.
If there exists $\epsilon > 0$ such that
\begin{enumerate}
\item $J[h_{H+1:2}](z)\in \reals^{d_y \times d_1}$ has 
	 $\sigmin(J[h_{H+1:2}](z)) \geq \epsilon$ for all $z \in \reals^{d_1}$,
\item $h_{H+1:2}(z)$ is twice-differentiable,
\end{enumerate}
then any critical point of $L(h)$, in terms of $D_{h_1} [L(h)], \dots, D_{h_{H+1}} [L(h)]$, is a global minimum.
\end{theorem}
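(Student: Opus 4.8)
The plan is to imitate the $d_y \le d_x$ branch of the proof of Theorem~\ref{thm:linfullrank}: only the stationarity condition $D_{h_1}[L(h)] = 0$ is needed, and it will be shown to force $h_{H+1:1} = h^*$ almost surely. Since $L(h) = \tfrac12 \E[\ltwo{h_{H+1:1}(X) - h^*(X)}^2] + C$ and the optimal value is $L^* = C$, this is precisely global optimality. (Just as in the linear case, the Fr\'echet derivatives $D_{h_i}[L(h)]$ for $i \ge 2$ play no role here.)

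First I would compute the Fr\'echet derivative of $L$ with respect to $h_1$. Set
\[
    g(x) \defeq J[h_{H+1:2}](h_1(x))^{T}\big(h_{H+1:1}(x) - h^*(x)\big) \in \reals^{d_1}.
\]
Perturbing $h_1 \mapsto h_1 + t\eta$ with $\eta \in \mc F_1$, writing $h_{H+1:1}(X) = h_{H+1:2}(h_1(X))$, and applying the chain rule gives
\begin{align*}
    D_{h_1}[L(h)](\eta)
    &= \E\big[\dotprod{h_{H+1:1}(X) - h^*(X)}{J[h_{H+1:2}](h_1(X))\,\eta(X)}\big] \\
    &= \E\big[\dotprod{g(X)}{\eta(X)}\big]
\end{align*}
for every $\eta \in \mc F_1$. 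Next I would argue that $D_{h_1}[L(h)] = 0$ implies $g(X) = 0$ almost surely. Here the twice-differentiability of $h_{H+1:2}$ is used: it makes $z \mapsto J[h_{H+1:2}](z)$ differentiable, so $g$ --- a product/composition of $J[h_{H+1:2}]$, the differentiable map $h_1$, and $h^* \in \mc F$ --- is itself differentiable. One may then test against $\eta(x) = \phi(x)\,g(x)$, where $\phi \ge 0$ is a smooth scalar factor with $\phi(0) = 0$ (so that $\eta(0) = 0$) that decays fast enough that $\eta$ has sublinear growth, i.e.\ $\eta \in \mc F_1$; then $0 = D_{h_1}[L(h)](\eta) = \E[\phi(X)\ltwo{g(X)}^2]$ forces $g(X) = 0$ a.s.\ (the single point $x = 0$ being immaterial).

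Finally, since $d_x \ge d_y$, the standing assumption $d_i \ge \min\{d_x,d_y\}$ gives $d_1 \ge d_y$, so $J[h_{H+1:2}](z) \in \reals^{d_y \times d_1}$ has full row rank whenever $\sigmin(J[h_{H+1:2}](z)) \ge \epsilon$, and consequently $\ltwo{J[h_{H+1:2}](z)^{T} v} \ge \sigmin(J[h_{H+1:2}](z))\,\ltwo{v} \ge \epsilon\,\ltwo{v}$ for every $v \in \reals^{d_y}$. Applying this with $z = h_1(X)$ and $v = h_{H+1:1}(X) - h^*(X)$, and using $g(X) = 0$ a.s., yields $h_{H+1:1}(X) = h^*(X)$ a.s., hence $L(h) = C = L^*$ and $h$ is a global minimum. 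The main obstacle is the rigorous version of the middle step: deducing a.s.\ vanishing of $g$ from vanishing of $D_{h_1}[L(h)]$ along \emph{every} direction in $\mc F_1$ requires exhibiting an admissible test direction that is genuinely differentiable, vanishes at the origin, and grows sublinearly --- and the twice-differentiability hypothesis on $h_{H+1:2}$ together with $h^* \in \mc F$ is exactly what guarantees $g$ is differentiable so that this cutoff construction is legitimate. The remaining steps are routine analogues of the linear argument.
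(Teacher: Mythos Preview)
Your argument is correct and, like the paper, only uses the single condition $D_{h_1}[L(h)]=0$; but the test direction you pick is genuinely different from the paper's. The paper plugs in the pseudoinverse direction
\[
\tilde\eta(x) \;=\; A(x)^{T}\bigl(A(x)A(x)^{T}\bigr)^{-1}\bigl(h_{H+1:1}(x)-h^*(x)\bigr),\qquad A(x)\defeq J[h_{H+1:2}](h_1(x)),
\]
which makes $A(x)\tilde\eta(x)=h_{H+1:1}(x)-h^*(x)$ and hence $\langle D_{h_1}[L(h)],\tilde\eta\rangle = 2(L(h)-L^*)$ directly. The uniform bound $\sigmin(A)\ge\epsilon$ is used there to show $\nnl{\tilde\eta}\le \nnl{h_{H+1:1}-h^*}/\epsilon$, yielding the quantitative (PL--type) inequality
\[
\nop{D_{h_1}[L(h)]}\;\nnl{h_{H+1:1}-h^*}\;\ge\;\epsilon\,(L(h)-L^*),
\]
from which the theorem follows. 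Your route instead tests against $\eta=\phi\, g$ with $g=A^{T}(h_{H+1:1}-h^*)$, deduces $g(X)=0$ a.s., and only then invokes $\sigmin(A)\ge\epsilon$ pointwise to kill $h_{H+1:1}-h^*$. This is more elementary (no pseudoinverse), but it is purely qualitative and in fact uses the uniform lower bound on $\sigmin$ only as a pointwise full--row--rank condition; the paper's argument genuinely needs the uniform $\epsilon$ to control $\nnl{\tilde\eta}$ and is rewarded with a gradient lower bound.

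One small point worth tightening: you need $\phi$ to compensate for possible growth of $\nop{A(x)}$, which is \emph{not} bounded by the hypotheses (only $\sigmin$ is), so ``decays fast enough'' should be made explicit. A clean choice is $\phi(x)=1/(1+\ltwo{g(x)}^{2})$: it is differentiable (since $g$ is, by the twice--differentiability hypothesis), strictly positive, and gives $\ltwo{\eta(x)}=\ltwo{g(x)}/(1+\ltwo{g(x)}^{2})\le 1/2$, so $\eta\in\mc F_1$ (recall $g(0)=0$, hence $\eta(0)=0$ without requiring $\phi(0)=0$). With that fix your proof goes through.
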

\begin{theorem}
\label{thm:nonlin2}
Consider the case $d_x \leq d_y$.
Assume that there exists some $j \in \{1, \dots, H+1\}$ such that $d_x = d_{j-1}$ and $d_y \leq d_{j}$. 
If there exist $\epsilon_1, \epsilon_2>0$ such that
\begin{enumerate}
\item $h_{j-1:1} : \reals^{d_x} \rightarrow \reals^{d_{j-1}} = \reals^{d_x}$ is invertible,
\item $h_{j-1:1}$ satisfies $\ltwo{h_{j-1:1}(u)} \geq \epsilon_1 \ltwo{u}$ for all $u \in \reals^{d_x}$,
\item $J[h_{H+1:j+1}](z) \in \reals^{d_y \times d_{j}}$ has 
	 $\sigmin(J[h_{H+1:j+1}](z)) \geq \epsilon_2$ for all $z \in \reals^{d_{j}}$,
\item $h_{H+1:j+1}(z)$ is twice-differentiable,
\end{enumerate}
then any critical point of $L(h)$, in terms of $D_{h_1} [L(h)], \dots, D_{h_{H+1}} [L(h)]$, is a global minimum.
\end{theorem}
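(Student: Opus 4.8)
The plan is to mirror the proof of Theorem~\ref{thm:linfullrank}: show that at any critical point the network computes the optimal map $h^*$ almost surely, so that, by the decomposition $L(h)=\tfrac12\E[\ltwo{h_{H+1:1}(X)-h^*(X)}^2]+C$ recorded above, $L(h)=C=L^*$ and $h$ is a global minimum. The ingredient I would derive first is the Fr\'echet derivative of $L(h)$ with respect to $h_j$: perturbing $h_j\mapsto h_j+t\eta$ with $\eta\in\mc F_j$, the chain rule together with $\E[Y\mid X]=h^*(X)$ gives
\begin{equation*}
    D_{h_j}[L(h)](\eta)=\E\!\left[(h_{H+1:1}(X)-h^*(X))^{T} J[h_{H+1:j+1}](h_{j:1}(X))\,\eta(h_{j-1:1}(X))\right],
\end{equation*}
where assumption (4) (twice-differentiability of $h_{H+1:j+1}$) makes $J[h_{H+1:j+1}]$ regular enough to justify differentiating under the expectation. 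This is the computation carried out in Appendix~\ref{apdx:nonlin}, which I would cite rather than repeat, and the critical-point hypothesis says the displayed quantity vanishes for every $\eta\in\mc F_j$.

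Write $g(x)\defeq J[h_{H+1:j+1}](h_{j:1}(x))^{T}(h_{H+1:1}(x)-h^*(x))\in\reals^{d_j}$, so the critical-point condition becomes $\E[g(X)^{T}\eta(h_{j-1:1}(X))]=0$ for every $\eta\in\mc F_j$. Here I would use assumptions (1)--(2): $h_{j-1:1}$ is a differentiable bijection of $\reals^{d_x}=\reals^{d_{j-1}}$ fixing the origin, and (2) gives $\ltwo{h_{j-1:1}^{-1}(v)}\le\epsilon_1^{-1}\ltwo{v}$, so its inverse has at most linear growth. The function $\eta_0\defeq g\circ h_{j-1:1}^{-1}$ should then be an admissible direction in $\mc F_j$: it vanishes at $0$ (since $h_{H+1:1}$, $h^*$, and $h_{j-1:1}$ all fix $0$), it is differentiable (chain rule plus assumption (4)), and it grows at most linearly (sublinearity of $h_{H+1:1}$ and $h^*$ composed with the linear growth bound on $h_{j-1:1}^{-1}$, once $\nop{J[h_{H+1:j+1}]}$ is controlled). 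Plugging $\eta=\eta_0$ and noting $\eta_0(h_{j-1:1}(X))=g(X)$ yields $\E[\ltwo{g(X)}^2]=0$, hence $g(X)=0$ almost surely, i.e.
\begin{equation*}
    (h_{H+1:1}(X)-h^*(X))^{T}J[h_{H+1:j+1}](h_{j:1}(X))=0 \quad\text{a.s.}
\end{equation*}

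To finish I would invoke assumption (3): since $d_j\ge d_y$ and $\sigmin(J[h_{H+1:j+1}](z))\ge\epsilon_2>0$ for all $z$, the matrix $J[h_{H+1:j+1}](z)$ has full row rank and $\ltwo{v^{T}J[h_{H+1:j+1}](z)}\ge\epsilon_2\ltwo{v}$ for every $v\in\reals^{d_y}$. Taking $z=h_{j:1}(X)$ and $v=h_{H+1:1}(X)-h^*(X)$ in the last identity forces $h_{H+1:1}(X)=h^*(X)$ almost surely, so the first term of $L(h)$ vanishes and $h$ is a global minimum.

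The main obstacle is the middle step: making rigorous that the directions $\{\eta\circ h_{j-1:1}:\eta\in\mc F_j\}$ are rich enough to annihilate $g$. Two points need care: (i) differentiability of $h_{j-1:1}^{-1}$ — invertibility plus the lower growth bound do not by themselves yield a differentiable inverse, so one likely needs the inverse function theorem (hence a nonsingular Jacobian of $h_{j-1:1}$), or else an approximation argument using compactly supported smooth directions in place of $\eta_0$; and (ii) the growth of $\eta_0$ when $\nop{J[h_{H+1:j+1}]}$ is unbounded, which may require truncating to (a neighborhood of) the support of $X$. I expect this is precisely where assumptions (1)--(2) and the requirement $d_{j-1}=d_x$ (rather than merely $d_{j-1}\ge d_x$, which would make $h_{j-1:1}$ non-surjective and the directions $\eta(h_{j-1:1}(X))$ too poor) are used, and where the appendix proof concentrates its effort.
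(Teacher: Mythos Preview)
Your strategy is exactly the paper's: test a single direction in $D_{h_j}[L(h)]$, built by precomposing with $h_{j-1:1}^{-1}$, and verify that it lies in $\mc F_j$. The one substantive difference is the choice of direction, and it lands precisely on your obstacle~(ii). You take
\[
\eta_0(w)=A(w)^{T}\,(h_{H+1:1}-h^*)\circ h_{j-1:1}^{-1}(w),\qquad A(w)\defeq J[h_{H+1:j+1}](h_j(w)),
\]
and then appeal to $\sigmin(A)\ge\epsilon_2$ after the fact; the paper instead takes
\[
\tilde\eta(w)=A(w)^{T}\bigl(A(w)A(w)^{T}\bigr)^{-1}(h_{H+1:1}-h^*)\circ h_{j-1:1}^{-1}(w).
\]
The pseudoinverse $A^{T}(AA^{T})^{-1}$ has operator norm at most $1/\epsilon_2$ uniformly in $w$, so the sublinear-growth check $\nnl{\tilde\eta}\le\nnl{h_{H+1:1}-h^*}/(\epsilon_1\epsilon_2)$ goes through without any bound on $\nop{A(w)}$, and plugging $\tilde\eta$ in gives $\dotprod{D_{h_j}[L(h)]}{\tilde\eta}=\E\bigl[\ltwo{h_{H+1:1}(X)-h^*(X)}^2\bigr]$ directly. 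Your $\eta_0$ would need $\sup_w\nop{A(w)}<\infty$ to belong to $\mc F_j$, which is not assumed (sublinear growth of $h_{H+1:j+1}$ does not bound its Jacobian). So the fix for (ii) is not truncation but the pseudoinverse.

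On obstacle~(i) you are right to be cautious, and the paper does not do better: it simply asserts that ``the inverse function of a differentiable and invertible function is also differentiable'' and moves on. So your concern there is a genuine soft spot shared by the paper's argument, not a divergence from it.
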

Note that these theorems give \emph{sufficient} conditions,
whereas Theorems~\ref{thm:linfullrank} and \ref{thm:lindefirank} provide \emph{necessary and sufficient} conditions.
So, if the sets we are describing in Theorems~\ref{thm:nonlin1} and \ref{thm:nonlin2} do not contain any critical point,
the claims would be vacuous. We ensure that there are critical points in the sets, by presenting the following proposition,
whose proof is also deferred to Appendix~\ref{apdx:nonlin}.
\begin{prop}
\label{prop:solexists}
For each of Theorems~\ref{thm:nonlin1} and \ref{thm:nonlin2}, there exists 
at least one global minimum solution of $L(h)$ satisfying the conditions of the theorem.
\end{prop}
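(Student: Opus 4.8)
The plan is to reduce the claim to an explicit construction. Recall the decomposition $L(h) = \tfrac12\,\E[\ltwo{h_{H+1} \circ \cdots \circ h_1(X) - h^*(X)}^2] + C$ established above, so the optimal value $L^* = C$ is attained precisely when $h_{H+1} \circ \cdots \circ h_1 = h^*$ almost surely. Hence, for each of the two theorems it suffices to exhibit a tuple $(h_1,\dots,h_{H+1}) \in \mc F_1 \times \cdots \times \mc F_{H+1}$ whose composition equals $h^*$ everywhere and which additionally satisfies every condition listed in that theorem. The idea is to put all the ``nonlinear work'' (applying $h^*$) into one designated layer and to make every other layer a fixed linear coordinate-selection map, i.e.\ a composition of canonical projections $z \mapsto (z_1,\dots,z_r)$ and zero-paddings. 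Such linear maps lie automatically in the relevant $\mc F_i$ (they are $C^\infty$, vanish at $0$, and have finite operator norm), are in particular twice differentiable, and the Jacobian of a composition of them is a fixed $0/1$ matrix whose singular values we can read off directly.

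\emph{Theorem~\ref{thm:nonlin1} ($d_x \ge d_y$).} Here $\min\{d_x,d_y\} = d_y$, so the ambient assumption gives $d_i \ge d_y$ for all $i$. Let $h_1$ map $x$ to the vector obtained by appending $d_1 - d_y$ zeros to $h^*(x)$, and for $i = 2,\dots,H+1$ let $h_i : \reals^{d_{i-1}} \to \reals^{d_i}$ keep the first $d_y$ coordinates and append $d_i - d_y$ zeros (legitimate since $d_{i-1},d_i \ge d_y$). Then $h_{H+1} \circ \cdots \circ h_1 = h^*$; the composite $h_{H+1} \circ \cdots \circ h_2$ is the linear map $z \mapsto (z_1,\dots,z_{d_y})$, so its Jacobian is $[\,I_{d_y}\ \ 0\,]$ for every $z$, giving $\sigmin \equiv 1$ (take $\epsilon = 1$), and this composite is $C^\infty$. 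Membership $h_1 \in \mc F_1$ is inherited from $h^* \in \mc F$ (differentiability, $h^*(0) = 0$, and sublinear growth all transfer), and each other $h_i \in \mc F_i$ since it is linear; so this tuple is a global minimum satisfying the hypotheses of Theorem~\ref{thm:nonlin1}.

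\emph{Theorem~\ref{thm:nonlin2} ($d_x \le d_y$).} Fix the index $j$ supplied by the hypothesis, with $d_{j-1} = d_x$ and $d_j \ge d_y$. Choose $h_1,\dots,h_{j-1}$ to be linear coordinate-selection maps whose composition is $\mathrm{id}_{\reals^{d_x}}$ --- possible because $d_0,\dots,d_{j-1} \ge d_x$ and $d_{j-1} = d_x$ (if $j = 1$ there is nothing to do, since $h_{0:1}$ is the identity by convention); then $h_{j-1:1}$ is invertible with $\ltwo{h_{j-1:1}(u)} = \ltwo{u}$, so conditions~1--2 hold with $\epsilon_1 = 1$. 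Let $h_j(z)$ be the vector obtained by appending $d_j - d_y$ zeros to $h^*(z)$, which lies in $\mc F_j$ by the same inheritance argument. Choose $h_{j+1},\dots,h_{H+1}$ to be linear coordinate-selection maps whose composition is the canonical projection $\reals^{d_j}\to\reals^{d_y}$ onto the first $d_y$ coordinates (nothing to do if $j = H+1$); then $J[h_{H+1:j+1}](z) = [\,I_{d_y}\ \ 0\,]$, so $\sigmin \equiv 1$ (take $\epsilon_2 = 1$) and $h_{H+1:j+1}$ is $C^\infty$. Finally $h_{H+1} \circ \cdots \circ h_1 = h^*$, so this tuple is a global minimum satisfying the hypotheses of Theorem~\ref{thm:nonlin2}.

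\emph{Main obstacle.} The reduction itself and the outsourcing of the nonlinearity to a single layer are immediate; the only care required is the bookkeeping over the intermediate widths $d_i$ --- arranging the glue maps before layer $j$ to compose to \emph{exactly} the identity and those after it to \emph{exactly} the canonical projection, while respecting each $d_i$, and handling the degenerate indices $j = 1$ and $j = H+1$. The mildly delicate point, specific to Theorem~\ref{thm:nonlin2}, is that the layers after $j$ must themselves have width at least $d_y$ for the composite Jacobian $J[h_{H+1:j+1}]$ to attain rank $d_y$ --- otherwise condition~3 cannot be met by \emph{any} choice of functions --- so the construction (equivalently, the reading of the theorem's hypothesis) must furnish a $j$ with $d_j, d_{j+1}, \dots, d_{H+1} \ge d_y$.
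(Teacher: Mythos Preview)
Your construction is essentially identical to the paper's: place $h^*$ into a single designated layer ($h_1$ for Theorem~\ref{thm:nonlin1}, $h_j$ for Theorem~\ref{thm:nonlin2}) and make every other layer a linear coordinate truncation/zero-padding map, so that the full composite is $h^*$ and the relevant Jacobian is a constant $0/1$ matrix with $\sigmin=1$. The only differences are cosmetic (you use ``keep the first $d_y$ coordinates and pad'' uniformly, whereas the paper uses the truncate-or-pad rule~\eqref{eq:identity}); both yield the same composite maps. The dimension concern you flag at the end --- that condition~3 of Theorem~\ref{thm:nonlin2} forces $d_l\ge d_y$ for all $l$ between $j$ and $H+1$ --- is a valid observation, and the paper's own proof glosses over it with ``it can be easily checked'' rather than addressing it.
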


\paragraph{Discussion and Future work.}
Theorems~\ref{thm:nonlin1} and \ref{thm:nonlin2} state that in certain sets of $(h_1, \dots, h_{H+1})$,
any critical point in function space a global minimum.
However, this does not imply that any critical point for a fixed sigmoid or arctan network is a global minimum.
As noted in \citep{bartlett2017deep}, there is a downhill direction in function space at any suboptimal point, but this direction might be orthogonal to the function space represented by a fixed network, and may hence result in local minima in the parameter space of the fixed architecture. 

Understanding the connection between the function space and parameter space of
commonly used architectures is an open direction for future research, and we believe that these results can be good initial steps from the theoretical point of view. For example, we can see that one of the sufficient conditions for global optimality is the Jacobian matrix being full rank. Given that a nonlinear function can locally be linearly approximated using Jacobians, this connection is already interesting. An extension of the function space viewpoint to cover different architectures or design new architectures (that have ``better'' properties when viewed via the function space view) should also be possible and worth studying.



\subsubsection*{Acknowledgments}
This research project was supported in parts by DARPA DSO's Fundamental Limits of Learning program.

\bibliography{cite}
\bibliographystyle{iclr2018_conference}

\appendix

\newpage

\section{Analysis of deep nonlinear networks}
\label{apdx:nonlin}
\subsection{Notation}
In this section, we introduce additional notation that is used in the proofs.
To emphasize that the Fr\'{e}chet derivative $D_{h_i} [L(h)]$ is a linear functional that outputs a real number, 
we will write $D_{h_i} [L(h)](\eta)$ in an inner-product form $\dotprod{D_{h_i} [L(h)]}{\eta}$.
This notation also helps avoiding confusion coming from multiple parentheses and square brackets.

There are many different kinds of norms that appear in the proofs. 
Given a finite-dimensional real vector $v$, $\ltwo{v}$ denotes its $\ell_2$ norm.
For a matrix $A$, its operator norm is defined as $\nop{A} = \sup_x \frac{\ltwo{Ax}}{\ltwo{x}}$.
Let $h \in \mc F$. Then define a ``generalized'' induced norm 
for nonlinear functions with sublinear growth: $\nnl{h} = \sup_{x} \frac{\ltwo{h(x)}}{\ltwo{x}}$,
where the subscript nl is used to emphasize that this norm is for nonlinear functions.
The norm $\nnl{\cdot}$ is defined in the same way for $\mc F_i$'s.
Now, given a linear functional $G$ that maps a function $f \in \mc F_i$ to a real number $\dotprod{G}{f}$, 
define the operator norm $\nop{G} = \sup_{f \in \mc F_i} \frac{\dotprod{G}{f}}{\nnl{f}}$.

\subsection{Fr\'{e}chet Derivatives}
By definition of Fr\'{e}chet derivatives, we have
\begin{equation*}
    \dotprod{D_{h_i} [L(h)]}{\eta} = \lim_{\epsilon \rightarrow 0} \frac{L(h_1, \dots, h_i + \epsilon\eta, \dots, h_{H+1}) - L(h)}{\epsilon},
\end{equation*}
where $\eta \in \mc F_i$ is the direction of perturbation and 
$\dotprod{D_{h_i} [L(h)]}{\eta}$ is the directional derivative along that direction $\eta$.
From the definition of $L(h)$,
\begin{align*}
    &L(h_1, \dots, h_i + \epsilon\eta, \dots, h_{H+1}) \\
    =& \frac{1}{2} \E \left [ \ltwo{h_{H+1:i+1} \circ (h_i + \epsilon \eta) \circ h_{i-1:1}(X) - h^*(X)}^2 \right ] + C\\
    =& \frac{1}{2} \E \left [ \ltwo{h_{H+1:i+1} (h_{i:1}(X) + \epsilon \eta(h_{i-1:1}(X))) - h^*(X)}^2 \right ] + C\\
    =& \frac{1}{2} \E \left [ \ltwo{h_{H+1:1}(X) + \epsilon J[h_{H+1:i+1}](h_{i:1}(X)) \eta(h_{i-1:1}(X)) + O(\epsilon^2) - h^*(X)}^2 \right ] + C\\
    =& L(h) + \epsilon \E \left [ (h_{H+1:1}(X) - h^*(X))^T J[h_{H+1:i+1}](h_{i:1}(X)) \eta(h_{i-1:1}(X))\right ]  + O(\epsilon^2).
\end{align*}
Therefore,
\begin{equation}
\label{eq:frechderiv}
    \dotprod{D_{h_i} [L(h)]}{\eta} = \E \left [ (h_{H+1:1}(X) - h^*(X))^T J[h_{H+1:i+1}](h_{i:1}(X)) \eta(h_{i-1:1}(X))\right ].
\end{equation}
This equation \eqref{eq:frechderiv} will be used 
in the proof of Theorems~\ref{thm:nonlin1} and \ref{thm:nonlin2}.

\subsection{Proof of Theorem~\ref{thm:nonlin1}}
From \eqref{eq:frechderiv}, consider $D_{h_1} [L(h)]$.
For any $\eta \in \mc F_1$,
\begin{equation*}
    \dotprod{D_{h_1} [L(h)]}{\eta} = \E \left [ (h_{H+1:1}(X) - h^*(X))^T J[h_{H+1:2}](h_1(X)) \eta(X)\right ].
\end{equation*}
Let $A(X) = J[h_{H+1:2}](h_1(X))$. Since $A(X)$ has full row rank by assumption, $A(X) A(X)^T$ is invertible.
Then define a particular direction 
\begin{equation*}
    \tilde \eta (X) = A(X)^T ( A(X) A(X)^T )^{-1} (h_{H+1:1}(X)-h^*(X)),
\end{equation*}
so that 
\begin{equation*}
    \dotprod{D_{h_1} [L(h)]}{\tilde \eta} = \E \left [ \ltwo {h_{H+1:1}(X) - h^*(X)}^2 \right ].
\end{equation*}
It remains to check if $\tilde \eta \in \mc F_1$. 
It is easily checked that $\tilde \eta (0) = 0$ because $h_{H+1:1}(0) - h^*(0) = 0$.
Since $J[h_{H+1:2}]$ is differentiable by assumption and $h_1 \in \mc F_1$, $A(X)$ is differentiable 
and $A(X)^T$, $( A(X) A(X)^T )^{-1}$ are differentiable functions. Also, $h_{H+1:1} - h^* \in \mc F$,
so we can conclude that $\tilde \eta$ is differentiable.

Moreover, if we decompose $A(X)$ with SVD, $A(X) = U\Sigma V^T$, $\Sigma$ is of the form 
$\Sigma = \begin{bmatrix} \Sigma_1 ~0 \end{bmatrix}$ and
\begin{align*}
    A(X)^T ( A(X) A(X)^T )^{-1} 
    &= V \Sigma^T U^T ( U \Sigma V^T V \Sigma^T U^T ) ^{-1}
    = V \Sigma^T U^T ( U \Sigma_1^2 U^T ) ^{-1} \\
    &= V \Sigma^T U^T U \Sigma_1^{-2} U^T 
    = V \begin{bmatrix} \Sigma_1^{-1}\\ 0 \end{bmatrix} U^T,
\end{align*}
from which we can see that 
\begin{equation*}
    \nop{A(X)^T ( A(X) A(X)^T )^{-1}} = \sigmax(A(X)^T ( A(X) A(X)^T )^{-1}) \leq 1/\epsilon,
\end{equation*} 
by our assumption. Note that, for any $X \in \reals^{d_x}$,
\begin{align*}
    \ltwo{\tilde \eta (X)} 
    &= \ltwo{A(X)^T ( A(X) A(X)^T )^{-1} (h_{H+1:1}(X)-h^*(X))} \\
    &\leq \nop{A(X)^T ( A(X) A(X)^T )^{-1}} \ltwo{h_{H+1:1}(X)-h^*(X)} \\
    &\leq \nop{A(X)^T ( A(X) A(X)^T )^{-1}} \nnl{h_{H+1:1}-h^*} \ltwo{X}.
\end{align*}
Since this holds for any $X$, we have
\begin{equation*}
    \nnl{\tilde \eta} \leq \nop{A(X)^T ( A(X) A(X)^T )^{-1}} \nnl{h_{H+1:1} - h^*} \leq \frac{\nnl{h_{H+1:1} - h^*}}{\epsilon},
\end{equation*}
which ensures that $\tilde \eta \in \mc F_1$.
Finally,
\begin{equation*}
    \nop{D_{h_1} [L(h)]} 
    \geq \frac{\dotprod{D_{h_1} [L(h)]}{\tilde \eta}}{\nnl{\tilde \eta}}
    \geq \frac{\epsilon \E \left [ \ltwo {h_{H+1:1}(X) - h^*(X)}^2 \right ]}{\nnl{h_{H+1:1} - h^*}}
    = \frac{\epsilon(L(h) - L^*)}{\nnl{h_{H+1:1} - h^*}},
\end{equation*}
which yields
\begin{equation*}
    \nop{D_{h_1} [L(h)]} \nnl{h_{H+1:1} - h^*} \geq \epsilon(L(h) - L^*).
\end{equation*}
From this we can see that if we have a critical point of $L(h)$,
then $\nop{D_{h_1} [L(h)]} = 0$ implies $L(h) = L^*$, 
which means that the critical point is a global minimum of $L(h)$.

\subsection{Proof of Theorem~\ref{thm:nonlin2}}
Recall that by assumption we have $j \in \{1, \dots, H+1\}$ such that $d_x = d_{j-1}$ and $d_y \leq d_{j}$.
Consider $D_{h_{j}} [L(h)]$, 
then for any $\eta \in \mc F_{j}$,
\begin{equation*}
    \dotprod{D_{h_{j}} [L(h)]}{\eta} = \E \left [ (h_{H+1:1}(X) - h^*(X))^T J[h_{H+1:j+1}](h_{j:1}(X)) \eta(h_{j-1:1}(X))\right ].
\end{equation*}
As done in the previous theorem, for any $w \in \reals^{d_{j-1}}$,
let $A(w) = J[h_{H+1:j+1}](h_j(w))$. Since $A(w)$ has full row rank by assumption, $A(w) A(w)^T$ is invertible.
Then define
\begin{equation*}
    \tilde \eta (w) = A(w)^T ( A(w) A(w)^T )^{-1} (h_{H+1:1}-h^*) \circ h_{j-1:1}^{-1} (w),
\end{equation*}
so that 
\begin{equation*}
    \dotprod{D_{h_{j}} [L(h)]}{\tilde \eta} = \E \left [ \ltwo {h_{H+1:1}(X) - h^*(X)}^2 \right ].
\end{equation*}
We need to check if $\tilde \eta \in \mc F_j$.
It is easily checked that $\tilde \eta (0) = 0$.
Since $J[h_{H+1:j+1}]$ is differentiable by assumption and $h_j \in \mc F_j$, $A(w)$ is differentiable, and so are 
$A(w)^T$ and $( A(w) A(w)^T )^{-1}$. 
The inverse function of a differentiable and invertible function is also differentiable,
so $(h_{H+1:1}-h^*) \circ h_{j-1:1}^{-1}$ is differentiable. Hence, we can conclude that $\tilde \eta$ is differentiable.

As seen in the previous section,
\begin{equation*}
    \nop{A(w)^T ( A(w) A(w)^T )^{-1}} = \sigmax(A(w)^T ( A(w) A(w)^T )^{-1}) \leq 1/\epsilon_2.
\end{equation*} 
By the assumption that $h_{j-1:1}$ is invertible and $\ltwo{h_{j-1:1}(u)} \geq \epsilon_1 \ltwo{u}$,
\begin{equation*}
    \ltwo{v} \geq \epsilon_1 \ltwo{h_{j-1:1}^{-1}(v)},
\end{equation*}
for all $v \in \mc \reals^{d_{j-1}}$. From this, we can see that $\nnl{h_{j-1:1}^{-1}} \leq 1/\epsilon_1$.
For any $w \in \reals^{d_{j-1}}$,
\begin{align*}
    \ltwo{\tilde \eta (w)} 
    &= \ltwo{A(w)^T ( A(w) A(w)^T )^{-1} (h_{H+1:1}-h^*) \circ h_{j-1:1}^{-1} (w)} \\
    &\leq \nop{A(w)^T ( A(w) A(w)^T )^{-1}} \ltwo{(h_{H+1:1}-h^*) \circ h_{j-1:1}^{-1} (w)} \\
    &\leq \nop{A(w)^T ( A(w) A(w)^T )^{-1}} \nnl{h_{H+1:1}-h^*} \ltwo{h_{j-1:1}^{-1} (w)}\\
    &\leq \nop{A(w)^T ( A(w) A(w)^T )^{-1}} \nnl{h_{H+1:1}-h^*} \nnl{h_{j-1:1}^{-1}} \ltwo{w}.
\end{align*}
From this, we have
\begin{equation*}
    \nnl{\tilde \eta} 
    \leq \nop{A(w)^T ( A(w) A(w)^T )^{-1}} \nnl{h_{H+1:1}-h^*} \nnl{h_{j-1:1}^{-1}} 
    \leq \frac{\nnl{h_{H+1:1} - h^*}}{\epsilon_1 \epsilon_2}.
\end{equation*}
Finally,
\begin{equation*}
    \nop{D_{h_{j}} [L(h)]} 
    \geq \frac{\dotprod{D_{h_{j}} [L(h)]}{\tilde \eta}}{\nnl{\tilde \eta}}
    \geq \frac{\epsilon_1\epsilon_2 \E \left [ \ltwo {h_{H+1:1}(X) - h^*(X)}^2 \right ]}{\nnl{h_{H+1:1} - h^*}}
    = \frac{\epsilon_1\epsilon_2(L(h) - L^*)}{\nnl{h_{H+1:1} - h^*}},
\end{equation*}
which yields
\begin{equation*}
    \nop{D_{h_j} [L(h)]} \nnl{h_{H+1:1} - h^*} \geq \epsilon_1 \epsilon_2 (L(h) - L^*).
\end{equation*}

\subsection{Proof of Proposition~\ref{prop:solexists}}
(Theorem~\ref{thm:nonlin1}) 
By assumption, we have $d_1 \geq d_y$. 
Set $h_1(x) = (h^*(x), 0, \dots, 0)$ where for every $x \in \reals^{d_x}$, 
the first $d_y$ components of $h_1(x)$ are identical to $h^*(x)$, and all other components are zero.
For the rest of $h_i$'s, define $h_i : \reals^{d_{i-1}} \rightarrow \reals^{d_i}$ to be
\begin{equation}
\label{eq:identity}
    h_i(w) = 
        \begin{cases}
            (w_1, \dots, w_{d_i}), & \textup{if } d_i \leq d_{i-1}, \\
            (w_1, \dots, w_{d_{i-1}}, 0, \dots, 0), & \textup{if } d_i > d_{i-1},
        \end{cases}
\end{equation}
for all $w \in \reals^{d_{i-1}}$. Since $d_i \geq d_y$ for all $i$, we can check that $h_{H+1} \circ \cdots \circ h_1 = h^*$,
and $h_i \in \mc F_i$ for all $i$. 
Moreover, for all $z \in \reals^{d_1}$, $J[h_{H+1:2}](z)$ is all 0 except 1's in diagonal entries,
so $\sigmin(J[h_{H+1:2}](z)) \geq 1$ and $h_{H+1:2}(z)$ is twice-differentiable.

(Theorem~\ref{thm:nonlin2}) 
It is given that we have $j \in \{1, \dots, H+1\}$ such that $d_x = d_{j-1}$ and $d_y \leq d_{j}$.
Set $h_j(x) = (h^*(x), 0, \dots, 0)$, where the first $d_y$ components are $h^*(x)$ and the rest are zero.
All the rest of $h_i$ are set as in \eqref{eq:identity}. 
Then, it can be easily checked that $h_i \in \mc F_i$ for all $i$ and all the conditions of the theorem are satisfied.

\section{Deferred Lemma}
\label{apdx:lem}
\begin{lemma}
\label{lem:saddle}
Suppose we are given a data matrix $X \in \reals^{d_x \times m}$ and an output matrix $Y \in \reals^{d_y \times m}$,
where $d_x < d_y$. Assume $XX^T$ and $YX^T$ have full ranks.
Consider minimizing the empirical squared error risk:
\begin{equation*}
     L(W_1, \dots, W_{H+1}) \defeq \frac{1}{2} \nfro{W_{H+1}W_{H}\cdots W_{1} X - Y}^2,
\end{equation*}
where $W_k \in \reals^{d_k \times d_{k-1}}$, $k = 1, \dots, H+1$ are weight matrices of the linear neural network,
and $d_0 = d_x$ and $d_{H+1} = d_y$ for simplicity in notation.
Also let $W$ denote the tuple $(W_1, \dots, W_{H+1})$.
Then, any critical point of $L(W)$ that is not a local minimum is a saddle point.
\end{lemma}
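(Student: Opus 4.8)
The plan is to show that a critical point $W$ of $L$ which is not a local minimum is also not a local maximum; then, since every neighborhood of $W$ contains both a strictly smaller value (the hypothesis) and a strictly larger value, $W$ is a saddle point. The engine is the elementary observation that for each index $i$, the map $L$ regarded as a function of the single block $W_i$ with the other blocks frozen equals $\tfrac12\nfro{S_i W_i P_i X-Y}^2$, a convex quadratic, where $S_i\defeq W_{H+1}\cdots W_{i+1}$ and $P_i\defeq W_{i-1}\cdots W_1$ (with $S_{H+1}=P_1=I$). I would first handle the \emph{nondegenerate} case: if some $i$ has $S_i\neq 0$ and $P_i\neq 0$, then (using that $XX^T$ full rank forces $X$ to have full row rank, hence $P_iX\neq0$) the linear map $\Delta\mapsto S_i\Delta P_i X$ is not the zero map, so this quadratic is strictly convex along some direction $\Delta$ in the $W_i$-block; since $W$ is critical its gradient there vanishes (Lemma~\ref{lem:lingrad}), so $L(W+t\Delta)=L(W)+\tfrac{t^2}{2}\nfro{S_i\Delta P_i X}^2>L(W)$ for $t\neq 0$, and $W$ is not a local maximum.

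The remaining case is that for \emph{every} $i$ either $S_i=0$ or $P_i=0$. Since $\{i:P_i=0\}$ is upward-closed, $\{i:S_i=0\}$ is downward-closed, and ($P_1=I$, $S_{H+1}=I$ force) $S_1=0$ and $P_{H+1}=0$, there are integers with $1\le s\le r+1\le H+1$ such that $S_i=0\iff i\le r$ and $P_i=0\iff i\ge s$; in particular $W_{H+1}\cdots W_1=S_1W_1=0$, so $L(W)=\tfrac12\nfro{Y}^2$. Note that any $W''$ with $\langle(W''_{H+1}\cdots W''_1)X,\,Y\rangle<0$ satisfies $L(W'')=\tfrac12\nfro{Y}^2-\langle(W''_{H+1}\cdots W''_1)X,Y\rangle+\tfrac12\nfro{(W''_{H+1}\cdots W''_1)X}^2>L(W)$, so it suffices to produce, in every neighborhood of $W$, a point at which $\langle(\text{product})X,Y\rangle<0$. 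Here the degenerate structure is exploited: perturbing only the two blocks $W_{s-1}\to W_{s-1}+\sigma\Delta_{s-1}$ and $W_{r+1}\to W_{r+1}+\tau\Delta_{r+1}$ and freezing the rest, a direct expansion shows that — because $W_{H+1}\cdots W_1=0$, $S_{s-1}=0$, and $P_{r+1}=0$ — all terms except the $\sigma\tau$ term vanish, leaving the product equal to $\sigma\tau\,S_{r+1}\Delta_{r+1}(W_r\cdots W_s)\Delta_{s-1}P_{s-1}$ (the middle factor being $I$ when $s=r+1$). Then $\langle(\text{product})X,Y\rangle=\sigma\tau\,\tr\!\big((W_r\cdots W_s)\Delta_{s-1}P_{s-1}XY^TS_{r+1}\,\Delta_{r+1}\big)$, so whenever $(W_r\cdots W_s)\Delta_{s-1}P_{s-1}XY^TS_{r+1}$ can be made nonzero by a choice of $\Delta_{s-1}$, a suitable choice of $\Delta_{r+1}$ and of the sign of $\sigma\tau$ makes this inner product negative, finishing the proof.

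That matrix can be made nonzero exactly when $W_r\cdots W_s\neq 0$ and $P_{s-1}XY^TS_{r+1}\neq 0$; the hard part will be the ``doubly degenerate'' configurations where it vanishes. The plan there is to recurse: if $W_r\cdots W_s=0$, the subnetwork $W_s,\dots,W_r$ has vanishing product and one repeats the argument on it (equivalently, one perturbs more blocks — each enters the product linearly, so the sign flips remain available); if instead $P_{s-1}XY^TS_{r+1}=0$, one enlarges the perturbed set by a further block and iterates. The recursion bottoms out at configurations in which the relevant prefix and suffix products are identities, where a vanishing coefficient would force $XY^T=0$, contradicting $\rank(YX^T)=\min\{d_x,d_y\}=d_x\ge 1$. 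Cleanly organizing this bookkeeping — which few blocks to perturb so that $\langle(\text{product})X,Y\rangle$ picks up a sign-changing term in every admissible degenerate configuration — is the crux, and it is what replaces the more involved treatment of degenerate critical points in \citet{kawaguchi2016deep}.
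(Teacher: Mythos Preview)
Your reduction to ``not a local maximum'' and your nondegenerate case (some $i$ with $S_i\neq 0$ and $P_i\neq 0$, hence a strictly convex direction in the $W_i$-block) are correct and essentially coincide with the paper's first case, which computes the Hessian in $W_{H+1}$ when $W_H\cdots W_1\neq 0$; your version is a mild generalization of the same idea.

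The gap is in your degenerate case. Aiming for $\langle(\text{product})X,Y\rangle<0$ is stronger than needed and is what forces the ``doubly degenerate'' recursion you could not close. Concretely, your two-block perturbation gives product $\sigma\tau\,\hat\Delta$ with $\hat\Delta=S_{r+1}\Delta_{r+1}(W_r\cdots W_s)\Delta_{s-1}P_{s-1}$, and the coefficient you need nonzero is $\tr\!\big((W_r\cdots W_s)\Delta_{s-1}P_{s-1}XY^TS_{r+1}\Delta_{r+1}\big)$; this can vanish for \emph{all} $\Delta_{s-1},\Delta_{r+1}$ whenever $P_{s-1}XY^TS_{r+1}=0$, which is perfectly possible even though $P_{s-1}\neq 0$ and $S_{r+1}\neq 0$ (nothing prevents $\rowsp(P_{s-1}XY^T)\perp\colsp(S_{r+1})$ inside $\reals^{d_y}$). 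Your proposed fix---``enlarge the perturbed set by a further block and iterate''---is not a proof: once you perturb three or more blocks, the product is no longer a single monomial in the perturbation parameters, and you have not shown which lower-order cross terms vanish, nor that the recursion terminates with a nonzero coefficient. Similarly, recursing on $W_r\cdots W_s=0$ changes what you must control (the middle perturbations do not move $L$ by themselves since $P_s=0$), and you have not tied that inner recursion back to a sign-changeable increase of $L$.

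Two clean ways out. First, drop the $\langle\cdot,Y\rangle<0$ target: if you can make $\hat\Delta\neq 0$ (which, after first perturbing the middle block $W_s,\dots,W_r$ to a nonzero product, needs only $S_{r+1}\neq 0$ and $P_{s-1}\neq 0$---both guaranteed by your $r,s$), then the two sign choices give products $\pm\hat\Delta$, and $L(+)+L(-)=\nfro{Y}^2+\nfro{\hat\Delta X}^2>2L(W)$, so one of them exceeds $L(W)$. Second---and this is what the paper does---avoid the $r,s$ bookkeeping entirely: in the degenerate case (which for you implies $W_H\cdots W_1=0$), perturb $W_1,W_2,\ldots$ sequentially by small random $\Delta_i$ and stop at the first index $i^\ast$ at which $W_{H+1}\cdots W_{i^\ast+1}V_{i^\ast}\cdots V_1\neq 0$; since the pre-$i^\ast$ product is zero, flipping the sign of $\Delta_{i^\ast}$ alone flips the full product to $\pm\hat\Delta$, and the same averaging inequality finishes. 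This sequential ``first nonzero'' stopping rule is precisely the device that replaces your undefined recursion.
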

\begin{proof}
For this lemma, we separate the proof into two cases: $W_H \cdots W_1 \neq 0$ and $W_H \cdots W_1 = 0$.
The crux of the proof is to show that any critical point cannot be a local maximum.
Then, any critical point is either a local minimum or a saddle point, so the conclusion of this lemma follows.

In case of $W_H \cdots W_1 \neq 0$,
we use some of the results in \citet{kawaguchi2016deep} and
examine the Hessian of $L(W)$ with respect to $\vect(W_{H+1}^T)$, 
where $\vect(A)$ denotes vectorization of matrix $A$.
Let $D_{\vect(W_{H+1}^T)} L(W)$ be the partial derivative of $L(W)$ with respect to 
$\vect(W_{H+1}^T)$ in numerator layout.
It was shown by \citet[Lemma~4.3]{kawaguchi2016deep} that the Hessian matrix
\begin{align*}
    \mc H(W) = D_{\vect(W_{H+1}^T)} \left ( D_{\vect(W_{H+1}^T)} L(W) \right )^T 
    &= \left ( I \otimes (W_H \cdots W_1 X) (W_H \cdots W_1 X)^T \right )\\
    &= \left ( I \otimes W_H \cdots W_1 XX^T W_1^T \cdots W_H^T \right ),
\end{align*}
where $\otimes$ denotes the Kronecker product of two matrices.
Notice that $\mc H(W)$ is positive semidefinite. Since $XX^T$ is full rank, 
whenever $W_H \cdots W_1 \neq 0$ there exists a strictly positive eigenvalue in $\mc H(W)$,
which means that there exists an increasing direction. So $W$ cannot be a local maximum.

The case where $W_H \cdots W_1 = 0$ requires a bit more careful treatment.
Note that this case corresponds to where we have degenerate critical points,
which are in many cases much harder to handle.

For any arbitrary $\epsilon>0$, we describe a procedure that perturbs the matrices $W_1, \dots, W_{H+1}$ 
by perturbations sampled from Frobenius norm balls of radius $\epsilon$ centered at $0$, 
which we will denote as $\mc B_i(\epsilon)$, $i = 1, \dots, H+1$. 
Let $\mc U(\mc B_i(\epsilon))$ be the uniform distribution over the ball $\mc B_i(\epsilon)$.
The algorithm goes as the following:
\begin{enumerate}
    \item For $i \in \{1, \dots, H+1\}$
        \begin{enumerate}[label*=\arabic*.]
            \item Sample $\Delta_i \sim \mc U(\mc B_i(\epsilon))$, and define $V_i = W_i + \Delta_i$.
            \item If $W_{H+1} \cdots W_{i+1} V_i \cdots V_1 \neq 0$, stop and return $i^* = i$.
        \end{enumerate}
\end{enumerate}
First, recall that the set of rank-deficient matrices have Lebesgue measure zero,
so for any sample $\Delta_i \sim \mc U(\mc B_i(\epsilon))$, $V_i = W_i + \Delta_i$ has full rank with probability 1.
If we proceed the for loop until $i=H+1$, we have a full-rank $V_{H+1} \cdots V_1$ with probability 1,
which means that the algorithm must return $i^* \in \{1, \dots, H+1\}$ with probability 1.
Notice that before and after the $i^*$-th iteration, we have
\begin{align*}
    &W_{H+1} \cdots W_{i^*} V_{i^*-1} \cdots V_1 = 0,\\
    &W_{H+1} \cdots W_{i^*+1} V_{i^*} \cdots V_1 = W_{H+1} \cdots W_{i^*+1} (W_{i^*}+\Delta_{i^*}) V_{i^*-1} \cdots V_1 \neq 0.
\end{align*}
This means that if we define $\hat \Delta = W_{H+1} \cdots W_{i^*+1} \Delta_{i^*} V_{i^*-1} \cdots V_1$,
then $\hat \Delta \neq 0$. Also, notice that 
\begin{equation*}
    W_{H+1} \cdots W_{i^*+1} (W_{i^*} - \Delta_{i^*}) V_{i^*-1} \cdots V_1 = - \hat \Delta.
\end{equation*}
Now, define two points
\begin{align*}
    U^{(1)} &= (V_1, \dots, V_{i^*-1}, W_{i^*}+\Delta_{i^*}, W_{i^*+1}, \dots, W_{H+1}),\\
    U^{(2)} &= (V_1, \dots, V_{i^*-1}, W_{i^*}-\Delta_{i^*}, W_{i^*+1}, \dots, W_{H+1}),
\end{align*}
and notice that they are all in the neighborhood of $W$, that is, the Cartesian product 
of $\epsilon$-radius balls centered at $W_1, \dots, W_{H+1}$.
Moreover, we have
\begin{align*}
    &L(W) = \frac{1}{2} \nfro{0 \cdot X - Y}^2 = \frac{1}{2} \nfro{Y}^2,\\
    &L(U^{(1)}) = \frac{1}{2} \nfro{\hat \Delta X - Y}^2 
                    = \frac{1}{2} \nfro{Y}^2 + \frac{1}{2} \nfro{\hat\Delta X}^2 - \dotprod{\hat \Delta X}{Y},\\
    &L(U^{(2)}) = \frac{1}{2} \nfro{-\hat \Delta X - Y}^2
                    = \frac{1}{2} \nfro{Y}^2 + \frac{1}{2} \nfro{\hat\Delta X}^2 + \dotprod{\hat \Delta X}{Y},
\end{align*}
from which we can see that at least one of $L(W) < L(U^{(1)})$ or $L(W) < L(U^{(2)})$ must hold.
This shows that for any $\epsilon>0$, there is a point $U$ in $\epsilon$-neighborhood of $W$ 
with a strictly greater function value $L(U)$.
This proves that $W$ cannot be a local maximum.
\end{proof}

\section{Deferred Proofs}
\label{apdx:proofs}
\subsection{Proof of Proposition~\ref{prop:lowranksol}}
In case of $k < \min\{d_x, d_y\}$, we can decompose the loss function in the following way:
\begin{align*}
    \nfro{RX - Y}^2 
        &= \nfro{RX - YX^T(XX^T)^{-1}X + YX^T(XX^T)^{-1}X - Y}^2 \\
        &= \nfro{RX - YX^T(XX^T)^{-1}X}^2 + \nfro{YX^T(XX^T)^{-1}X - Y}^2 \\
        &+ 2\tr((YX^T(XX^T)^{-1}X - Y) (RX - YX^T(XX^T)^{-1}X)^T).
\end{align*}
Let us take a close look into the last term in the RHS. 
Note that $YX^T(XX^T)^{-1}X$ is the orthogonal projection of $Y$ onto $\rowsp(X)$,
so each row of $YX^T(XX^T)^{-1}X-Y$ must be in $\nulsp(X)$.
Also, 
\begin{equation*}
    (RX - YX^T(XX^T)^{-1}X)^T = X^T (R^T - (XX^T)^{-1} X Y^T).
\end{equation*}
It is $X^T$ right-multiplied with some matrix, so its columns must lie in $\colsp(X^T) = \rowsp(X)$.
By the fact that $\nulsp(X)^\perp = \rowsp(X)$,
\begin{equation*}
    (YX^T(XX^T)^{-1}X - Y) (RX - YX^T(XX^T)^{-1}X)^T = 0,
\end{equation*}
thus 
\begin{equation*}
    L_0(R) = \frac{1}{2} \nfro{RX - YX^T(XX^T)^{-1}X}^2 + \frac{1}{2}\nfro{YX^T(XX^T)^{-1}X - Y}^2
\end{equation*}
holds.

Now, \eqref{eq:relaxedprob} becomes a problem of minimizing $\nfro{RX - YX^T(XX^T)^{-1}X}^2$ 
subject to the rank constraint $\rank(R) \leq k$.
The optimal solution for this is obtained when $RX$ is the $k$-rank approximation of $YX^T(XX^T)^{-1}X$.
Then, $k$-rank approximation of $YX^T(XX^T)^{-1}X$ can be expressed as $\hat U \hat U^T YX^T(XX^T)^{-1}X$,
where $\hat U$ is unique due to our assumption that all singular values are distinct. 
Therefore,
\begin{equation*}
    R^* = \hat U \hat U^T YX^T(XX^T)^{-1}
\end{equation*}
is the unique global minimum solution of \eqref{eq:relaxedprob} when $k<\min\{d_x, d_y\}$.

\subsection{Proof of Lemma~\ref{lem:lingrad}}
\begin{align*}
    &L(W_1, \dots, W_{i-1}, W_i+\Delta_i, W_{i+1}, \dots, W_{H+1}) \\
    =& \frac{1}{2} \nfro{W_{H+1}\cdots W_{i+1}(W_i+\Delta_i)W_{i-1}\cdots W_1 X - Y}^2\\
    =& \frac{1}{2} \nfro{W_{H+1}\cdots W_1 X - Y + W_{H+1}\cdots W_{i+1}\Delta_i W_{i-1}\cdots W_1 X}^2\\
    =& L(W) + 
          \tr((W_{H+1}\cdots W_{i+1}\Delta_i W_{i-1} \cdots W_1 X)^T (W_{H+1}\cdots W_1 X - Y)) + O(\nfro{\Delta_i}^2)\\
    =& L(W) + 
          \tr(W_{i+1}^T \cdots W_{H+1}^T (W_{H+1} \cdots W_1 X - Y) X^T W_1^T \cdots W_{i-1}^T \Delta_i^T) 
          + O(\nfro{\Delta_i}^2).
\end{align*}
From this, we can conclude that
\begin{equation*}
\frac{\partial L}{\partial W_i} = W_{i+1}^T \cdots W_{H+1}^T ( W_{H+1}\cdots W_{1} X - Y ) X^T W_1^T \cdots W_{i-1}^T.
\end{equation*}

\subsection{Proof of Lemma~\ref{lem:frolb}}
1. Since $A^T A \succeq \sigmin^2 (A) I$, $B^T A^T A B \succeq \sigmin^2(A) B^T B$. Then
\begin{equation*}
    \nfro{AB}^2 = \tr(B^T A^T A B) \geq \sigmin^2 (A) \tr(B^T B) =  \sigmin^2(A) \nfro{B}^2.
\end{equation*}
2. Since $B B^T \succeq \sigmin^2 (B) I$, $A B B^T A^T \succeq \sigmin^2 (B) A A^T$. Then
\begin{equation*}
    \nfro{AB}^2 = \tr(B^T A^T A B) = \tr(A B B^T A^T) \geq \sigmin^2 (B) \tr(A A^T) = \sigmin^2 (B) \nfro{A}^2.
\end{equation*}

\end{document}